\documentclass{article}
\usepackage[margin=2cm]{geometry}

\usepackage[utf8]{inputenc} 
\usepackage[T1]{fontenc}    
\usepackage{hyperref}       
\usepackage{url}            
\usepackage{booktabs}       
\usepackage{amsfonts}       
\usepackage{nicefrac}       
\usepackage{microtype}      
\usepackage{xcolor}         
\definecolor{brickred}{RGB}{203,65,84}
\usepackage{enumitem}       
\usepackage{algorithmic}    
\usepackage{algorithm}      
\usepackage{graphicx}
\usepackage{subcaption}
\usepackage{adjustbox}      
\usepackage{wrapfig}        
\usepackage{natbib}
\usepackage{array}

\newcolumntype{?}{!{\vrule width 1pt}}

\usepackage{amsmath}
\usepackage{amssymb}
\usepackage{mathtools}
\usepackage{amsthm}
\usepackage{bm}

\usepackage[capitalize,noabbrev]{cleveref}

\theoremstyle{plain}
\newtheorem{theorem}{Theorem}[section]

\theoremstyle{definition}

\theoremstyle{remark}

\newcommand{\newterm}[1]{{\bf #1}}

\def\eqref#1{equation~\ref{#1}}

\def\1{\bm{1}}

\def\eps{{\epsilon}}

\def\va{{\bm{a}}}

\def\vc{{\bm{c}}}

\def\vk{{\bm{k}}}
\def\vl{{\bm{l}}}

\def\vo{{\bm{o}}}

\def\vq{{\bm{q}}}

\def\vs{{\bm{s}}}

\def\vv{{\bm{v}}}

\def\vx{{\bm{x}}}
\def\vy{{\bm{y}}}
\def\vz{{\bm{z}}}

\DeclareMathAlphabet{\mathsfit}{\encodingdefault}{\sfdefault}{m}{sl}
\SetMathAlphabet{\mathsfit}{bold}{\encodingdefault}{\sfdefault}{bx}{n}

\newcommand{\R}{\mathbb{R}}

\newcommand{\RMS}{\mathrm{RMS}}
\newcommand{\MLP}{\mathrm{MLP}}
\newcommand{\Attn}{\mathrm{Attn}}
\newcommand{\lnorm}[2]{\Vert #1 \Vert_{#2}}
\newcommand{\ltwonorm}[1]{\lnorm{#1}{2}}
\newcommand{\innerp}[2]{\langle{#1, #2}\rangle}

\renewcommand{\tilde}{ }

\newcommand{\methodname}{Recurrent Transformer}
\newcommand{\methodnames}{Recurrent Transformers}
\newcommand{\methodnameabv}{\textsc{RT}}

\newcommand{\qkRMS}{\mathrm{\textcolor{magenta}{RMS}}}
\newcommand{\temp}{\textcolor{red}{\mathrm{temp}}}
\newcommand{\zmark}[1]{\textcolor{blue!55!black}{#1}}

\usepackage{amsthm}

\theoremstyle{plain}
\newtheorem{thm_samy}{Theorem}

\title{
The Recurrent Transformer:\\
Greater Effective Depth and Efficient Decoding
}

\author{
\centering
\begin{tabular}{c}
Costin-Andrei Oncescu\thanks{Correspondence to: concescu@g.harvard.edu} \quad
Depen Morwani \quad
Samy Jelassi \\
[0.25em]
Alexandru Meterez \quad
Mujin Kwun \quad
Sham Kakade \\
[0.5em]
Harvard University
\end{tabular}
}
\date{}

\begin{document}
\maketitle
\begin{abstract}
Transformers process tokens in parallel but are temporally shallow: at position $t$, each layer attends to key--value pairs computed based on the previous layer, yielding a depth capped by the number of layers. Recurrent models offer unbounded temporal depth but suffer from optimization instability and historically underutilize modern accelerators. We introduce the \emph{Recurrent Transformer}, a simple architectural change where \emph{each layer} attends to key--value pairs computed off its own activations, yielding layerwise recurrent memory while preserving standard autoregressive decoding cost. We show that the architecture can emulate both (i) a conventional Transformer and (ii) token-to-token recurrent updates under mild assumptions, while avoiding optimization instability.
Naively, prefill/training appears bandwidth-bound with effective arithmetic intensity near $1$ because keys and values are revealed sequentially; we give an exact tiling-based algorithm that preserves the mathematical computation while reducing HBM traffic from $\Theta(N^2)$ to $\Theta(N\log N)$, increasing effective arithmetic intensity to $\Theta(N/\log N)$ for sequence length $N$.
On 150M and 300M parameter C4 pretraining, \methodname s improve cross-entropy over a parameter-matched Transformer baseline and achieve the improvement with fewer layers (fixed parameters), suggesting that recurrence can trade depth for width, thus reducing KV cache memory footprint and inference latency. Code is available at \url{https://github.com/geniucos/recurrent-transformer}
\end{abstract}

\section{Introduction}

Transformers \citep{vaswani2017attention} are highly effective sequence models, but their computation across positions is structurally shallow: within each layer, position $t$ attends to key--value pairs computed from the previous layer embeddings, allowing essentially at most one interaction per layer between any pair of positions. A growing body of theory studies the fundamental limitations implied by bounded depth in attention models, including circuit-complexity characterizations of what low-depth Transformers can and cannot represent \citep{merrill2022saturated,liu2022shortcuts}. These perspectives motivate architectures that achieve greater effective depth.

We introduce the \newterm{\methodname} (\methodnameabv), a simple modification of how key--value pairs are computed that makes each layer temporally recurrent. In a standard Transformer, at layer $\ell$, the key--value pair at position $t$ is computed from the layer-$(\ell-1)$ representation at that position and can then be attended to by later positions $t' > t$. In the \methodname, by contrast, the key--value pair at position $t$ in layer $\ell$ is computed from that position's output at layer $\ell$, rather than from its layer-$(\ell-1)$ representation. Consequently, a later position $t < t'$ at layer $\ell$ attends to a representation at $t$ that already reflects layer $\ell$ attention and MLP computation. Importantly, \methodname{} performs this recurrence separately within each layer, so each layer maintains its own key--value memory. This differs from the Feedback Transformer~\citep{fan2020feedback}, which uses a shared memory across layers, and this layerwise separation is a key reason why our architecture can be implemented efficiently.

We motivate \methodname{}'s design through lenses of representation, optimization and computational efficiency:

\begin{figure*}[t]
  \centering
  \includegraphics[width=\textwidth]{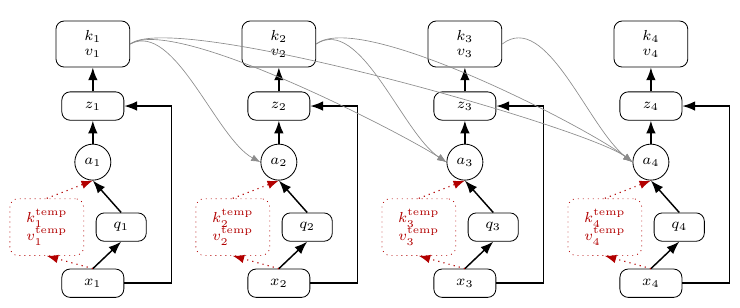}
  \caption{One layer of the \methodname{} mapping input embeddings $\vx_1\ldots\vx_4$ to output embeddings $\vz_1\ldots \vz_N$. Notice how the \emph{persistent} key--value pairs are a function of the layer's output and are used for all subsequent attention computations. The \emph{temporary} key--value pairs are only used at the time they are computed and then discarded. They are only used to avoid ill-defined attention since, for example, $\va_2$ cannot attend to $(\vk_2, \vv_2)$ as that indirectly depends on it. This is in contrast to a vanilla Transformer that uses these same key--value pairs for all subsequent attention computation as well.}
  \label{fig:rt-arch}
\end{figure*}

\paragraph{(i) Representational perspective.}
\methodnames{} retains per-token key--value memory just like a Transformer, but increase the space of computations that can be expressed within a single layer by allowing later positions to attend to representations that have already undergone attention and MLP processing. Under mild assumptions, \methodnames{} can emulate standard Transformer behavior; conversely, by restricting attention to the previous position, they can implement token-to-token recurrent computation. This positions \methodname{} between fully parallel attention and purely recurrent state-space computation, while avoiding a capped-memory bottleneck.

\paragraph{(ii) Training Stability.}
Viewing the model as a directed computation graph over positions, classical RNNs transmit information from position $i$ to $j$ only through the length-$(j-i)$ chain of intermediate states. The potentially large length of such paths gives rise to vanishing and exploding gradient phenomena \citep{bengio1994long,pascanu2013difficulty}, making it hard to ensure information flow between distant positions. \methodname{} alleviates this by introducing many additional multi-hop paths, corresponding to repeated attend+MLP applications across positions within a layer, while still permitting direct one-hop attention interactions between any two positions. In practice, we find that this architecture, together with appropriate normalization before key--value computation and standard depth-wise residual scaling \citep{bordelon2023depthwise,yang2023tensor}, trains stably. We expand on this view, and on why exploding gradients are not expected to be an issue, in Section~\ref{sec:paths}.

\paragraph{(iii) Training-time efficiency.}
A naive implementation of \methodname{} training/prefill is sequential in position and appears bandwidth-bound: keys and values are revealed one position at a time, and each query must aggregate over a linearly-growing prefix, leading to a very low effective arithmetic intensity -- $\Theta(1)$ -- under the Roofline model \citep{williams2009roofline}. We give an \emph{exact} tiling algorithm that preserves the mathematical attention computation while reorganizing memory movement, reducing high-bandwidth memory (HBM) traffic from $\Theta(N^2)$ to $\Theta(N\log N)$ and raising effective arithmetic intensity to $\Theta(N/\log N)$. Our key observation is that, during training/prefill, the full sequence of queries is available in advance even though persistent key--value pairs are revealed causally. This makes it possible to reorganize the computation into a tiled schedule, in the spirit of Flash Inference \citep{flashInference}, that reuses each revealed key--value tile across many future queries before it is evicted from fast memory. The final algorithm interleaves attention blocks and MLP computation while employing the same methodology as~\cite{rabe2021blockAttention,dao2022flashattention} to accumulate attention contribution. 



\paragraph{(iv) Depth to inference efficiency.}
Crucially, the additional effective temporal depth can translate into a better depth--width tradeoff: at fixed parameter count, achieving the same quality with fewer layers reduces the amount of stored key--value state and the corresponding decode-time memory traffic. Our experiments support this regime, with shallower \methodname{} models outperforming deeper Transformer baselines.  

\paragraph{Contributions.} 
\begin{itemize}
    \item In Section~\ref{sec:arch}, we propose the \methodname{} (\methodnameabv), a layerwise recurrent attention architecture that computes each layer’s key--value pairs from that layer’s outputs rather than from the previous layer's representations. 
    \item In Section~\ref{sec:repr}, we provide representational arguments showing \methodname{} can emulate standard self-attention behavior and can implement token-to-token recurrent computation via attention concentration under mild assumptions.
    \item In Section~\ref{sec:paths}, we provide a path-based analysis of training stability in \methodname{}, showing how the architecture combines additional multi-hop computation with direct one-hop attention paths, and giving theoretical evidence in a simplified setting that neither exploding gradients nor vanishing gradients are expected under appropriate scaling.
    \item In Section~\ref{sec:tiling}, we provide an \emph{exact}, IO-aware tiling algorithm for prefill/training that preserves the mathematical attention computation while reducing memory traffic from $\Theta(N^2)$ to $\Theta(N\log N)$ and increasing effective arithmetic intensity from $\Theta(1)$ to $\Theta(N/\log N)$.
    \item In Section \ref{sec:comp_challenges}, we outline various computational challenges and design choices required to make \methodname{} training more efficient and practical.
    \item In Section~\ref{sec:expts}, we present empirical results on 300M-parameter C4 pretraining showing improved cross-entropy over parameter-matched Transformer baselines and favorable depth--width tradeoffs at fixed parameter count (as shown in Figure \ref{fig:c4-300m}). In particular, \methodname{} with $6$ layers performs comparably to $12$ layers (fixed parameters), reducing KV cache size by approximately 30\% and lowering decode-time memory traffic, thereby improving inference efficiency. Additional results for the 150M-parameter model are provided in Appendix~\ref{app:150m-pretrain}.
\end{itemize}

\begin{figure}[t]
\centering

\begin{minipage}[t]{0.5\linewidth}
\vspace{0pt}
  \centering
  \includegraphics[width=\linewidth]{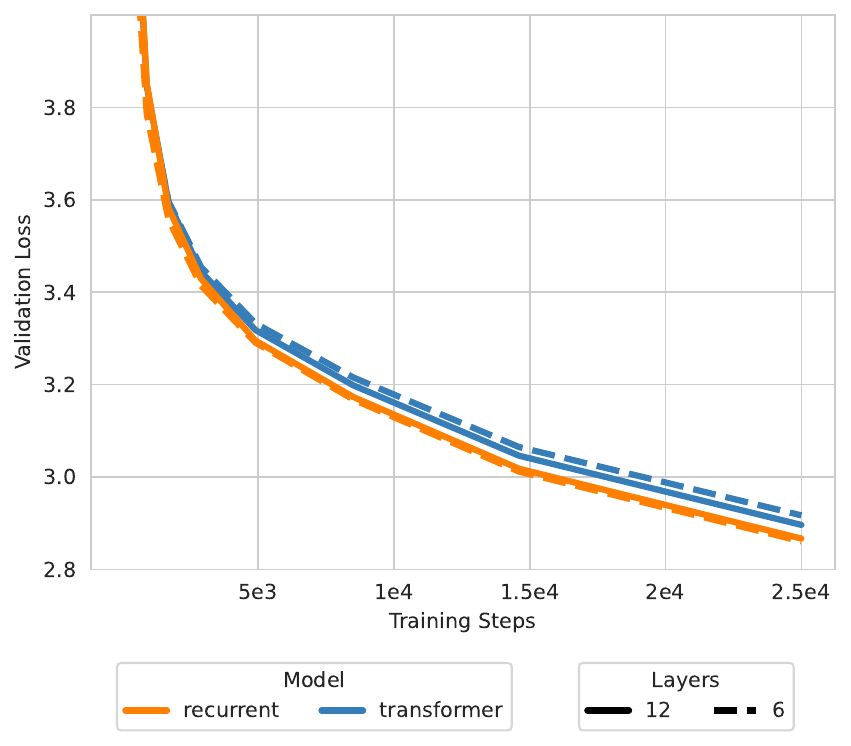}
  \caption{C4 pretraining: loss curves for 300m parameter model trained on C4 dataset.}
  \label{fig:c4-300m}
\end{minipage}
\hfill
\begin{minipage}[t]{0.45\linewidth}
\vspace{50pt}
  \centering
  \captionof{table}{C4 pretraining loss for 300M parameter model.}
  \label{tab:c4-512-300m}
  \begin{tabular}{lccc}
  \toprule
  Model & Layers & Width & Val CE $\downarrow$ \\
  \midrule
  Transformer & 6 & $2048$ & $2.917$ \\
  Transformer & 12 & $1408$ & $2.896$ \\
  Transformer & 24 & $1024$ & $2.892$ \\
  \methodname{} & 12 & $1408$ & $2.867$ \\
  \methodname{} & 6 & $2048$ & $2.86$ \\
  \bottomrule
  \end{tabular}
\end{minipage}

\end{figure}


\section{Architectural overview and notation}
\label{sec:arch}

\paragraph{Architectural overview.}
Relative to a standard causal Transformer, the defining change in \methodname{} is where the key--value pairs exposed to future positions come from. In a standard Transformer, the key--value pair at position $i$ is computed from the layer input at that position. In \methodname{}, by contrast, the \emph{persistent} key--value pair at position $i$ is computed from that position's layer output. Consequently, later positions attend to earlier positions whose representations have already undergone same-layer attention and MLP computation, making each layer recurrent along the temporal axis.

This creates a circularity at the current position: because the layer output at position $i$ also attends to the current position, the persistent pair $(\vk_i,\vv_i)$ cannot itself be used while computing that output. To resolve this, \methodname{} distinguishes between two kinds of key--value pairs. A \emph{temporary} pair, computed from the current layer input, is used only when evaluating attention at the current position. A \emph{persistent} pair, computed from the resulting layer output, is then stored and made available to all later positions.

\paragraph{Notation.}
We present the single-head formulation; multihead attention applies the same construction independently per head and then uses the usual output projection.
We assume a sequence length of $N$ and use $L$ for the number of stacked layers.
Let $D$ be the embedding dimension and consider a single layer with inputs $\vx_1,\ldots,\vx_N\in\R^D$.
Let $\MLP:\R^D\to\R^D$ denote the MLP block and let $\RMS:\R^D\to\R^D$ denote Root Mean Square normalization \citep{RMSNorm}. While in practice we use learnable parameters, as far as presentation and analysis is concerned, we take $\RMS(x)=\sqrt{D} \cdot \vx / \ltwonorm{\vx}$.
We use (magenta) $\qkRMS$ to distinguish query/key normalization~\citep{dehghani2023scalingQKNorm}.

The attention operator
$\Attn:(\R^D\times\R^D)^{*}\times \R^D\to\R^D$
maps a sequence of key--value pairs $(\vk_1,\vv_1),\ldots,(\vk_\ell,\vv_\ell)$ and a query $\vq$ to
\begin{align*}
\Attn\big((\vk_1,\vv_1),\ldots,(\vk_\ell,\vv_\ell),\vq\big) = \sum_{i=1}^\ell {\vv_i \cdot
\frac{\exp(\innerp{\vk_i}{\vq})}
{\sum_{j=1}^\ell{\exp(\innerp{\vk_j}{\vq}})}
}
\end{align*}

We use projection matrices $Q,K,V\in\R^{D\times D}$ to compute queries, keys and values based off an embedding. Following standard Transformer parameterizations~\cite{bordelon2023depthwise,yang2023tensor}, we use pre-LN~\cite{xiong2020layerPreLN} and assume attention and MLP residual updates are initialized/parameterized with an appropriate $1/\sqrt{L}$ scale so chaining maps of the form
$\vx\mapsto \vx+ \frac{1}{\sqrt{L}} \{\Attn,\MLP\}(\RMS(\vx))$
is well-behaved.

\subsection{The Transformer layer}
\label{sec:baseline-layer}

We first recall a standard \emph{causal} decoder-only Transformer layer~\citep{vaswani2017attention}. Given inputs $\vx_1,\ldots,\vx_N\in\R^D$, position $i$ forms its query, key, and value from the current layer input:
\begin{align*}
\vq_i &= \qkRMS[Q\,\RMS(\vx_i)], \\
\vk_i &= \qkRMS[K\,\RMS(\vx_i)], \\
\vv_i &= V\,\RMS(\vx_i).
\end{align*}
The attention output at position $i$ is then computed by attending over the prefix of key--value pairs available up to that position:
\begin{align*}
\va_i &= \Attn\big((\vk_1,\vv_1),\ldots,(\vk_i,\vv_i),\vq_i\big).
\end{align*}
Finally, the layer output is obtained by adding the attention and MLP residual branches:
\begin{align*}
\vy_i &= \vx_i + \frac{1}{\sqrt{L}}\left(\va_i + \MLP[\RMS(\vx_i+\frac{1}{\sqrt{L}}\va_i)]\right).
\end{align*}

The key structural point is that, in a standard Transformer, the key--value pair stored at position $i$ is computed from the layer input at the same position.


\subsection{The \methodname{} layer}
\label{sec:rt-layer}

\methodname{} layers (illustrated in Figure~\ref{fig:rt-arch}) differ from standard Transformer layers only in how the key--value pairs exposed to future positions are formed. At position $i$, \methodname{} first forms the query together with a \emph{temporary} key--value pair from the current layer input:
\begin{align*}
\vq_i &= \qkRMS[Q\,\RMS(\vx_i)], \\
\vk_i^{\temp} &= \qkRMS[K\,\RMS(\vx_i)], \\
\vv_i^{\temp} &= V\,\RMS(\vx_i).
\end{align*}
These definitions are identical to the Transformer's query, key, and value projections at position $i$. The attention output at position $i$ is then computed using the persistent key--value pairs from earlier positions together with the temporary pair at the current position:
\begin{align*}
\va_i
&=
\Attn\big(
(\vk_1,\vv_1),\ldots,(\vk_{i-1},\vv_{i-1}),
(\vk_i^{\temp},\vv_i^{\temp}),
\vq_i
\big).
\end{align*}
We next form the layer output representation
\begin{align*}
\vz_i &= \vx_i + \frac{1}{\sqrt{L}}\left(\va_i + \MLP[\RMS(\vx_i+\frac{1}{\sqrt{L}}\va_i)]\right),
\end{align*}
which is both the representation passed to the next layer and the source from which the persistent key--value pair at position $i$ is computed. We define that persistent pair by projecting from this output:
\begin{align}
\label{eq:persistentK}
\vk_i &= \qkRMS[K\,\RMS(\vz_i)], \\
\label{eq:persistentV}
\vv_i &= V\,\RMS(\vz_i).
\end{align}

Thus, $\vk_i^{\temp}, \vv_i^{\temp}$ is used only to compute attention at position $i$; it is not exposed to future positions. The persistent pair, by contrast, is defined only after $\vz_i$ has been formed and is then stored for use by all later positions. Thus, unlike in a standard Transformer, future positions attend not to a pair computed from the layer input at position $i$, but to one computed from the already-updated representation $\vz_i$. 

We reuse the same projection matrices $K$ and $V$ for both the temporary and persistent key--value pairs. Consequently, \methodname{} does not introduce additional key/value projection parameters relative to a Transformer; this reuse also preserves a shared semantics between the temporary and persistent key--value representations.

\subsection{Closest Related Work}
\label{sec:closest-related-work}
The closest representational relatives are Feedback Transformer variants.
Feedback Transformer~\citep{fan2020feedback} uses a cross-layer feedback memory shared across depth, essentially having just one list of key-value pairs computed based on the whole model's output rather than independently at each layer. Staircase Attention~\citep{ju2021staircase} generalizes Feedback Transformers, studying recurrent processing and caching variants with weight sharing -- still at a model rather than layerwise level.
This separation matters not only representationally but also computationally: within an \methodnameabv{} layer, all queries are available early, which is the enabling condition behind our efficient training methodology (Section~\ref{sec:tiling}).

TransformerFAM~\citep{hwang2024transformerfam} is closer in that it also operates independently at each layer and allows later positions to access more processed representations. However, it does so through a bounded memory that is read from and written to via attention. By contrast, \methodname{} retains per-token persistent key--value memory rather than compressing past information into a fixed-size state. This difference is important both for avoiding a bounded-memory bottleneck and for the representational results of Section~\ref{sec:repr-transformers}.

\section{Representational Perspective} \label{sec:repr}
In this section, we theoretically show that \methodname{} can emulate both a Transformer and RNN under mild assumptions. This shows that it subsumes both RNNs and Transformers, at least in the representation power.

\subsection{Representing Transformers}
\label{sec:repr-transformers}
Intuitively, \methodnames{} can recover the behavior of a standard Transformer of lower width by ensuring that the persistent key--value pairs computed from $\vz_i$ track those that would have been computed from $\vx_i$ via $K$ and $V$ projections. We concretize this statement below:

\begin{thm_samy}[informal]
Any width-$d'$ Transformer can be approximately simulated by a width-$d=3d'$ \methodname{}: the simulated Transformer activations can be embedded into disjoint feature groups of the \methodnameabv's embeddings. The \methodnameabv{} layer can be parameterized so that (i) attention scores are preserved and (ii) the layer output exactly tracks the Transformer layer output.
\label{thm:infrml-gen}
\end{thm_samy}

At a high level, the construction relies on representing smaller Transformer states inside a larger embedding of \methodnameabv{} by dedicating disjoint feature blocks to different roles. One block stores a protected copy of the layer input $\vx_i$ so that when \methodnameabv{} later computes persistent keys and values from the layer output $\vz_i$, it can still recover exactly the same key--value pairs the Transformer would have computed from $\vx_i$. A separate block is used to hold the attention contribution $\va_i$, so that when adding it to $\vx_i$ prior to applying the MLP, the contents of $\vx_i$ are protected from being lost. In this way, later tokens see identical attention scores, while the layer output matches the Transformer's layer output in the designated block. The width overhead of a factor of $3$, rather than $2$, is a subtle technical requirement for stacking multiple layers; a single layer can be replicated with an overhead of $2$. The complete construction and formal proof are provided in Appendix~\ref{app:transformer-sim}.

\subsection{Representing token-to-token recurrence}
\label{sec:repr-rnn}
If, using positional embeddings or biases, attention concentrates locally to the previous position, \methodnames{} implement an RNN-like update.
Formally, if $\va_i$ is dominated by the previous persistent value $\vv_{i-1}$, i.e. $\innerp{\vq_i}{\vk_{i-1}} \gg \innerp{\vq_i}{\vk^{\temp}_i}$ and $\innerp{\vq_i}{\vk_{i-1}} \gg \innerp{\vq_i}{\vk_j}$ for any $j < i - 1$, then we get that:
\begin{align*}
\vz_i &\approx \vx_i + \vv_{i-1} + \MLP[\RMS(\vx_i+\vv_{i-1})] = V\,\RMS(\vz_{i-1}) + \vx_i + \MLP[\RMS(\vx_i+V\,\RMS(\vz_{i-1}))]
\end{align*}
Under the additional simplifying assumption that $V$ is the identity, this becomes a particular state recurrence with a skip connection:
\begin{align*}
\vz_i = \RMS(\vz_{i-1}) + \vx_i + \MLP[\RMS(\vx_i+\RMS(\vz_{i-1}))]
\end{align*}
We do not claim to reproduce gated RNN/LSTMs, nor that training would yield to learning such structures.
We stress that representationally, \methodname{} is rich enough to express explicit iterative computation within a layer while also retaining full-prefix per-token memory (which was required to simulate Transformers in the previous section).

Crucially, once an architecture can represent such iterative computation, a natural question is whether the classic learnability issues of RNNs \citep{bengio1994long,pascanu2013difficulty} impede training.
Section~\ref{sec:paths} explains why \methodnames{} multi-hop dynamics can still train stably.

\subsection{Why temporal depth matters}
Transformers are shallow-through-time: deeper iterative computation along the sequence must be simulated primarily by stacking layers.
Theory on low-depth attention models and finite-automata tracking problems suggests that bounded depth can have concrete consequences, with shallow Transformers being representationally insufficient to simulate certain automata \citep{liu2022shortcuts} and more generally bound to TC0 - a class of shallow circuits~\citep{merrill2022saturated}.
\methodnames{} expose additional temporal depth within each layer. This is complementary to the depth obtained from stacking layers, thus pointing to the potential of achieving matching Transformers' effective depth while using fewer layers. We corroborate this hypothesis empirically in Section~\ref{sec:expts}.

\section{Training Stability of \methodname{}}
\label{sec:paths}
In this section, we explain how \methodname{} manages to avoid degenerate dynamics such as gradient vanishing or exploding through depth. We formalize our arguments by viewing the model as a directed computation graph over positions: there is an edge $i\to j$ when the computation at position $j$ \emph{directly} depends on quantities computed at position $i$.
In a classical RNN, information (and gradients) from position $i$ to $j$ must traverse the full chain
$i\to i\!+\!1\to\cdots\to j$,
and repeated composition along long chains leads to vanishing/exploding gradient phenomena \citep{bengio1994long,pascanu2013difficulty}.
This chain topology forces all influence from position $i$ to $j$ through $(j-i)$ successive state transitions. Stabilizing training typically requires these transitions to be close to contractive, but then the influence of $\vx_i$ on $\vx_j$ shrinks rapidly with $(j-i)$, making distant information difficult to transmit. While in RNNs this issue can be alleviated through careful initialization schemes~\citep{orvieto2023resurrecting}, our method takes advantage of the fact that there are both direct hops, as well as additional multi-hop paths between layers.

As in a standard Transformer, token $j$ can directly attend to any earlier token $i<j$ via the stored key--value pair $(\vk_i,\vv_i)$, creating a one-hop information path $i\!\to\! j$.
The key difference is that in \methodname{}, the stored pair $(\vk_i,\vv_i)$ is computed from the \emph{layer output} $\vz_i$, and $\vz_i$ already includes the result of attending to earlier stored pairs.
Consequently, information can propagate not only directly from $i$ to $j$, but also \emph{indirectly}.

Concretely, a multi-hop path from token $1$ to token $4$ (Figure~\ref{fig:rt-arch}) can go through intermediate write--read steps:
\[
\vx_1 \to \vz_1 \to (\vk_1,\vv_1) \to \va_2 \to \vz_2 \to (\vk_2,\vv_2) \to \va_4 \to \vz_4
\]
Here each step $\vz_t \to (\vk_t,\vv_t)$ is a \emph{write} to the layer's persistent memory, and each step $(\vk_t,\vv_t) \to \va_{t'}$ is a \emph{read} by attention at any later position $t'>t$.
Chaining these write--read operations yields multi-hop influence paths whose length scales with the distance between positions, enabling within-layer iterative computation (as in Section~\ref{sec:repr-rnn}) while preserving the direct one-hop attention routes of a Transformer.
\paragraph{Dampening long paths without eliminating long-range access.}
Multi-hop paths are only useful if they do not explode.
In practice, two levers dominate.
First, standard depth-wise scaling conventions for residual branches keep per-layer updates in a stable range \citep{bordelon2023depthwise,yang2023tensor}.
Second, normalization preceding computation of persistent keys/values (the $\RMS(\vz_i)$ inside Equations~\ref{eq:persistentK}-\ref{eq:persistentV}) controls magnitudes even though $\vz_i$ is a sum of multiple components.
Empirically, these choices place long multi-hop influences on the vanishing end: longer chains have smaller effect.
Unlike a pure RNN, this does not remove long-range access because direct attention edges remain available even when long chains are damped.

In the theorem below, for a very simplified setup without normalization, we show that we do not get exploding gradients at initialization. Normalization helps in stability, that is, it allows stable training of \methodname{}  with higher learning rates. This is demonstrated empirically in Appendix \ref{app:layernorm_stable}.

\begin{theorem} \label{thm:train_stable}
    Consider a simplified 1-layer uniform-attention only \methodnameabv{} layer with inputs given by $x_1,...,x_n$ and outputs denoted as $z_1,...,z_n$, where
    \[ z_k = x_k + \frac{\alpha}{k} \left( V x_k + V \sum_{j=1}^{k-1} z_j \right) \]
    where $\alpha$ is a scalar denoting the scaling of the residual and $V$ is the value matrix. Then, for $k \geq 2$,

    \[ \frac{\partial z_k}{\partial x_1} = \frac{1}{k!} \sum_{r=1}^k {k \brack r}\,\alpha^r V^r \]

    where ${k \brack r}$ denotes the total number of permutations of k elements having exactly $r$ cycles.
\end{theorem}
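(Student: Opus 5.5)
We would prove the statement by turning the definition of $z_k$ into a scalar-like recurrence for the Jacobians $G_k := \partial z_k/\partial x_1$, solving it in closed form as a product, and then recognizing the product as a rising factorial whose coefficients are the unsigned Stirling numbers of the first kind ${k \brack r}$.

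\textbf{Step 1: derive the recurrence.} Differentiate the defining relation with respect to $x_1$. For $k=1$ we get $G_1 = I + \alpha V$, since $z_1 = x_1 + \alpha V x_1$. For $k \geq 2$, the term $x_k$ does not depend on $x_1$, so
\[ G_k = \frac{\alpha}{k}\, V \sum_{j=1}^{k-1} G_j . \]
All the $G_j$ are polynomials in $V$, so they commute with each other and with $V$. This lets us manipulate them exactly as if $V$ were a scalar.

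\textbf{Step 2: telescope via partial sums.} Set $S_k := \sum_{j=1}^{k} G_j$, so that $S_1 = I+\alpha V$. For $k\ge 2$, Step 1 gives $G_k = \frac{\alpha V}{k} S_{k-1}$, and therefore
\[ S_k = S_{k-1} + G_k = S_{k-1}\Big(I + \frac{\alpha V}{k}\Big). \]
Inducting from $S_1$, we obtain
\[ S_{k-1} = \prod_{m=1}^{k-1}\Big(I+\frac{\alpha V}{m}\Big) = \frac{1}{(k-1)!}\prod_{m=1}^{k-1}(mI+\alpha V). \]
Substituting back into $G_k = \frac{\alpha V}{k} S_{k-1}$ gives, for $k \ge 2$,
\[ G_k = \frac{1}{k!}\,\alpha V\prod_{m=1}^{k-1}(\alpha V + mI) = \frac{1}{k!}\prod_{m=0}^{k-1}(\alpha V + mI). \]
This is $\frac{1}{k!}$ times the rising factorial $X(X+1)\cdots(X+k-1)$ evaluated at the commuting argument $X=\alpha V$.

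\textbf{Step 3: expand the rising factorial.} We use the classical identity
\[ X(X+1)\cdots(X+k-1)=\sum_{r=1}^{k}{k \brack r}X^r, \]
which holds for $k \ge 1$ since ${k \brack 0}=0$ in that range. If we want it self-contained, we prove it by induction on $k$. Multiplying the identity for $k$ by $(X+k)$ gives the recurrence
\[ {k+1 \brack r} = k{k \brack r} + {k \brack r-1}, \]
which is the standard combinatorial recurrence for permutations with $r$ cycles: element $k+1$ either forms a new fixed-point cycle, or is inserted after one of the $k$ existing elements. Because this is a polynomial identity with integer coefficients, it transfers verbatim to the commutative algebra generated by $V$, and this yields the claimed formula.

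\textbf{Main obstacle.} The only delicate points are bookkeeping ones. First, the $k=1$ term behaves differently: $G_1 = I+\alpha V$ is not $\frac{1}{1!}\alpha V$, which is exactly why the theorem is stated only for $k\ge2$. Second, the recursion must be seeded with $S_1 = I+\alpha V$ rather than $G_1$ alone. We would check $k=2$ explicitly as a sanity test: Step 2 gives $G_2 = \frac{\alpha V}{2}(I+\alpha V) = \frac12(\alpha V + \alpha^2V^2)$, which agrees with ${2\brack 1}={2\brack 2}=1$.
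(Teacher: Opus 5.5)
Your proposal is correct and follows essentially the same route as the paper: the Jacobian recurrence $G_k = \frac{\alpha}{k}V\sum_{j<k}G_j$, the partial sums $S_k = (I+\frac{\alpha}{k}V)S_{k-1}$ seeded at $S_1 = I+\alpha V$, the resulting product $\frac{1}{k!}\prod_{m=0}^{k-1}(\alpha V + mI)$, and its expansion as a rising factorial with unsigned Stirling numbers of the first kind. Your remarks on commutativity of polynomials in $V$, the inductive proof of the rising-factorial identity, and the $k=2$ check are sound additions that the paper leaves implicit.
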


As the total number of permutations is $k!$, the theorem above shows that as long as the maximum eigenvalue of $\alpha V$ is smaller than $1$, we do not get an exploding gradient from $z_j$ to $x_1$. Thus, for orthonormal initialization, for any $\alpha < 1$, we expect to be in this regime. Moreover, since the overall gradient is summed over paths of various lengths (given by $r$ in the above expression), we can see that we have non-vanishing gradient even when the maximum eigenvalue of $\alpha V$ is smaller than $1$. In particular, since ${k \brack 1} = (k-1)!$, the term in the above expression corresponding to $r=1$ is $\frac{\alpha}{k} \cdot V$, which is precisely the gradient a vanilla transformer would yield. Proof for this theorem can be found in Appendix \ref{app:train_stable}.

\begin{figure}[ht]
  \begin{center}
    \centerline{\includegraphics[width=0.6\columnwidth]{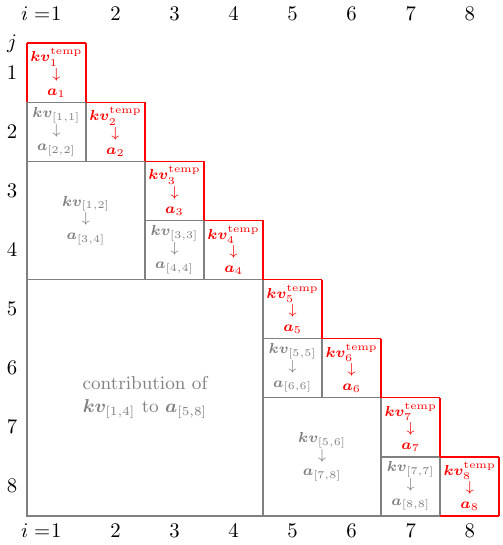}}
    \caption{
      We use the tiling of~\cite{flashInference} to increase arithmetic intensity during the forward pass since $(\vk_t, \vv_t)$ only become available after the attention output $\va_t$ is computed - this in turn happens once position $t$ has attended to all previous KVs.
    }
    \label{fig:tiling}
  \end{center}
\end{figure}

\section{Exact Tiling for Training and Prefill}
\label{sec:tiling}

\paragraph{What makes naive evaluation slow.}
During training/prefill, \methodnames{} are fundamentally sequential in position;
to compute the persistent pair $(\vk_t,\vv_t)$ we must first compute $\vz_t$, and $\vz_t$ depends on $\va_t$, which aggregates over all previous persistent key--value pairs: $\{(\vk_i, \vv_i)\}_{i<t}$. Therefore, a naive implementation reveals persistent keys/values one position at a time by having each new query aggregate over a growing prefix, yielding low reuse and high memory traffic.

\paragraph{A short Roofline view: why we care about arithmetic intensity.}
The Roofline model \citep{williams2009roofline} bounds attainable throughput by either peak compute or peak memory bandwidth depending on arithmetic intensity (FLOPs per byte moved).
When attention repeatedly streams large prefixes of keys/values to produce small incremental updates, effective arithmetic intensity (AI) can be close to constant, making the operation bandwidth-bound even on large accelerators.
This is the regime where reorganizing memory movement (even without changing the math) can give large wins.

\paragraph{Enabling observation: within-layer queries are available early.}
Despite the sequential reveal of persistent keys/values, all queries $\{\vq_i\}_{i=1}^N$ in a layer depend only on the layer input $\{\vx_i\}$ and can be computed early on in parallel. This means that one could do some eager work of "aggregating" the contribution of any key--value pairs available thus far, to any future queries, not just the immediately upcoming one. For example, after $(\vk_4, \vv_4)$ are computed, naively, we would wait until next step when we need to know $\va_5$; for that, we check the whole prefix of $4$ key--value pairs, "inquiring" about \emph{just one} query ($\vq_5$). This "just one" is what gives the arithmetic intensity of $\approx 1$. Alternatively, one can already start accounting for how they contribute to $\va_5\ldots \va_8$ - inquiring about $4$ queries at once and thus raising the arithmetic intensity to $\approx 4$ \footnote{Since we also need to load the queries, for $\text{cnt}_q$ queries to attend to $\text{cnt}_{kv}$ key--value pairs we get an AI of $\Theta(2\cdot \text{cnt}_q \cdot \text{cnt}_{kv} / (\text{cnt}_q + 2\text{cnt}_{kv}))$}.


A very similar regime is exploited in the Flash Inference framework \citep{flashInference} -- while their framework is meant for decoding, one forward pass of \methodnameabv{} is essentially a sequence of decode steps. It applies to our case since the computation of interest is:
\begin{itemize}
    \item Contribution-based ($\va_i$ can be accumulated over different groups of key--value pairs)
    \item the contribution is independent of future $\vz$'s (i.e., all queries are readily available)
\end{itemize} 
The second condition also clarifies why the same approach cannot extend to cross-layer feedback architectures~\citep{fan2020feedback,ju2021staircase}: when future queries indirectly depend on feedback that is only produced after running later layers, queries are not all available early.

\paragraph{Exact tiling schedule.}
Our algorithm is an exact evaluation algorithm: it computes the same attention outputs up to floating-point reordering effects.
The schedule follows the tiling in Figure~\ref{fig:tiling}.
It interleaves
\begin{itemize}
\item computing $\vz_t$ (via $\MLP$) as soon as $\va_t$ is available, to then reveal the new persistent key--value pair ($\vk_t, \vv_t)$ and
\item updating attention accumulators for several future queries that are already known - by processing the newly freed tile.
\end{itemize}
For example, as $\va_6$ becomes available, $\vz_6$ and then ($\vk_6, \vv_6)$ are computed and then one can process all the the contribution of $\{(\vk_5, \vv_5), (\vk_6, \vv_6)\}$ to $\{\va_7, \va_8\}$ (by "asking" queries $\vq_7, \vq_8$).
In order to aggregate attention contribution, we maintain the same online softmax statistics as \citep{rabe2021blockAttention,dao2022flashattention} (running attention score maxima and normalizing factor) so that contributions from multiple key/value tiles can be accumulated stably. The full algorithm description is available in Appendix~\ref{app:complete-algo}.

\begin{figure}[t]
    \centering
    \includegraphics[width=0.6\linewidth]{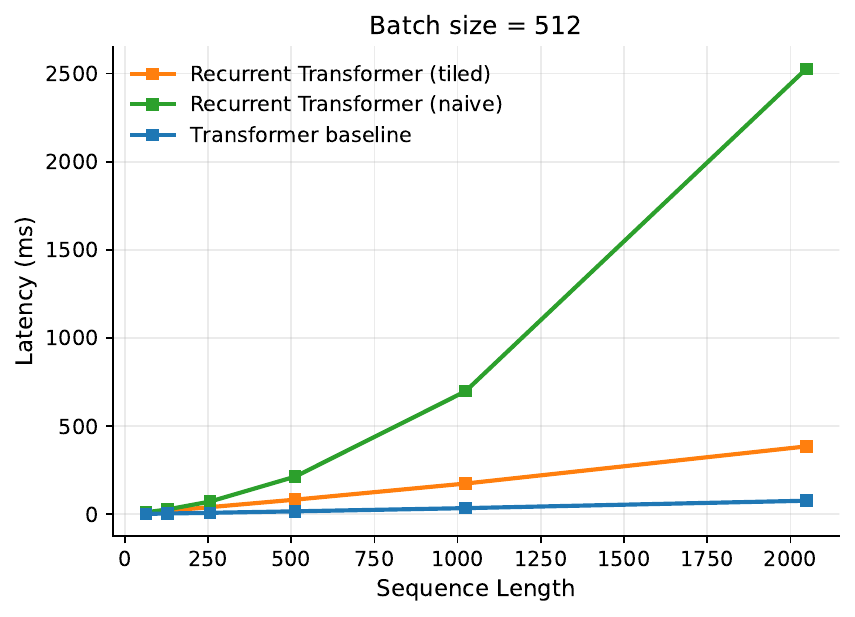}
    \caption{One-layer forward-pass latency as a function of sequence length at batch size $512$ on a single H100 GPU with $1024$ width. The naive recurrent implementation shows approximately quadratic growth with context length, whereas the tiled implementation scales much closer to linearly. This matches the intended effect of the tiled schedule, which increases reuse of loaded key--value pairs across multiple future queries. We also include the vanilla Transformer baseline for reference.}
    \label{fig:latency-scaling}
\end{figure}

\paragraph{Asymptotics.}
Counting HBM movement, the naive one-query-at-a-time implementation incurs $\Theta(N^2)$ memory traffic.
The tiled schedule reduces traffic to $\tilde{\Theta}(N\log N)$ by reusing streamed key/value tiles across many queries, while attention FLOPs remain $\Theta(N^2)$.
Consequently, effective arithmetic intensity increases from $\Theta(1)$ to $\Theta(N/\log N)$. The gains of this tiling approach can be seen in Figure~\ref{fig:latency-scaling} - while the latency of a naive eager implementation grows approximately quadratically with context length, our method exhibits near-linear scaling. 

\section{A deep dive into the computational challenges} \label{sec:comp_challenges}

In this section, we outline the key computational design decisions required to make \methodname{} training practically efficient, enabling the execution of our language modeling experiments. In contrast to the tiling algorithm discussed earlier, which focuses on algorithmic structure, the emphasis here is on implementation-level optimizations. These changes do not alter asymptotic complexity, but instead yield meaningful constant-factor speedups that are critical for reducing overall training time. 

\paragraph{The setup.}
We run all experiments on H100 GPUs, with training carried out on a single device at a time. Our implementation is in PyTorch~\citep{paszke2017automaticPyTorch}. Beyond the algorithmic tiling strategy of Section~\ref{sec:tiling}, we make several implementation choices aimed at improving hardware utilization and reducing overheads. While we successfully use Torch compile to fuse a number of components, we do not rely on custom kernels in the current implementation. We leave such kernel-level optimization to future work and focus here on the algorithmic improvements introduced by the architecture and computation schedule.

Unless otherwise noted, all latency measurements assume hidden dimension $1024$ and $16$ attention heads, are averaged over $5$ runs after $3$ warmup runs, and have standard deviation below $1$ ms. Latencies are reported on a per-layer basis: they measure the computation of a single layer’s map $\vx \mapsto \vz$, including both attention and MLP computation, but excluding embedding, unembedding, and loss computation, which are identical across Recurrent Transformers and Transformers.

\subsection{MLPs and batch size}
\label{sec:batch-size}

While the tiling algorithm greatly improves the arithmetic intensity of the attention component of~\methodnames{}, the MLP computations must be interleaved with it and can become the dominant cost of a forward pass. The reason is that, unlike in a standard Transformer, the MLP does not receive all $B \times N$ tokens at once. Instead, it processes only $B$ tokens at a time over $N$ iterations, one position at a time. As a result, the per-device batch size $B$ directly controls the arithmetic intensity of the MLP; in the regime $B \le O(d)$, this intensity is approximately linear in $B$.

In practice, for the model scales we consider, $B = 512$ provides a favorable trade-off between GPU utilization and activation memory. This constraint is fundamental to recurrent-in-time architectures and is not specific to the attention mechanism; similar issues arise in classical recurrent models such as LSTMs. In particular, sustaining such a batch size on a single device is already challenging from the perspective of activation memory, even for models in the 150M--300M parameter range. Under ordinary circumstances, one would employ gradient accumulation, but that defeats the purpose here: increasing the total batch size via accumulation does not increase the effective batching seen by the MLP, and therefore does not improve arithmetic intensity.

\paragraph{Activation Checkpointing} We therefore rely on activation checkpointing. One useful property of our computation is that once the inputs $\vx$ to each layer are stored, the outputs $\vz$ can be retained essentially ``for free,'' since they serve as the inputs to the next layer. This leads to a substantially cheaper recomputation procedure. In particular, once the persistent key--value pairs have been (parallelly) reconstructed from $\vz$, the remaining intermediate quantities can be recovered in a fully parallel manner. In particular, the attention-related intermediates can be recomputed without replaying the slow sequential process by which the $\vz_i$ were originally revealed one at a time. Consequently, although we still incur the standard cost of checkpointing, the recomputation overhead is meaningfully smaller than that of the original forward pass.

\paragraph{Critical batch size.}
Even if memory permitted arbitrarily large batches, there remains a statistical efficiency limit to how many tokens can be processed per optimizer step before optimization quality begins to degrade~\citep{McCandlish2018Empirical, Shallue2018Measuring}. In our setup, this critical batch size is around $256$K tokens per optimizer iteration for Transformer models in the 150M and 300M parameter range~\citep{zhang2025how}. Accordingly, throughout our experiments we use sequences of length $512$ and batch size $512$, corresponding to $256$K tokens per iteration. In Appendix~\ref{app:150m-pretrain}, we further verify that the critical batch size of~\methodname{} is not below this value.

\subsection{Using CUDA Graphs}

\begin{table}
\centering
\caption{One layer of Recurrent Transformer forward pass latency (ms) for sequences of $512$ tokens each. CPU overhead dominates at lower batch sizes and we employ CUDA Graphs to mitigate this.}
\label{tab:cudagraph}
\begin{tabular}{lrr}
\toprule
batch size & without CUDA Graphs & with CUDA Graphs \\
\midrule
32 & 277.08 & 38.85 \\
64 & 279.23 & 42.28 \\
128 & 279.42 & 48.95 \\
256 & 276.05 & 61.39 \\
512 & 277.33 & 81.73 \\
1024 & 275.49 & 134.09 \\
2048 & 293.70 & 240.83 \\
\bottomrule
\end{tabular}
\end{table}

Both the attention and MLP portions of our architecture involve $O(N)$ launches of moderately small kernels. Even when the underlying kernels are themselves compute-bound, the amount of work per kernel can be small enough that CPU-side dispatch overhead becomes a dominant bottleneck. Ordinarily, launch overhead is hidden because future kernels can be enqueued while current ones are still executing. Here, however, the kernels are sufficiently short-lived that this overlap is limited, and dispatch latency becomes visible on the critical path.

For this reason, we use CUDA Graphs, recording the full forward (and backward) pass computation and replaying it with a single launch. This turns out to be crucial for performance. The resulting latency improvements are reported in Table~\ref{tab:cudagraph}. One noteworthy feature of Table~\ref{tab:cudagraph} is that, without CUDA Graphs, latency remains nearly flat across a wide range of batch sizes, indicating that dispatch overhead rather than arithmetic work is the main bottleneck. With CUDA Graphs enabled, latency scales much more meaningfully with batch size, reflecting the underlying compute cost more faithfully.

\subsection{Cache locality and memory access pattern}

The tiling schedule also has a favorable memory-access pattern. As we iterate through positions, the portions of the KV cache accessed at successive steps overlap heavily and are often quite small: on average involving only $O(\log N)$ positions. To exploit this locality, we store the KV cache with the position dimension first, rather than the more conventional batch- or head-major layout. This ensures that the slices accessed by each tiled update are contiguous in memory, improving cache locality and reducing unnecessary memory movement.

\subsection{The backward pass}

To avoid repeated \texttt{cat}-operations, which would increase both memory traffic and peak memory usage, we preallocate the persistent KV cache and write to it in place. Since PyTorch autograd is not compatible with in-place operations, we implement a custom backward pass.

A naive implementation would simply mimic the reverse traversal that autograd would have carried out on the corresponding computational graph. However, the structure of the computation allows a more parallel schedule. In particular, within the reverse loop over positions, one only needs to accumulate the gradients with respect to $(\vk_i, \vv_i)$. The reason is that, before moving from position $i$ to position $i-1$ and propagating the effect of $\va_{i-1}$ onto earlier key--value pairs, one must already have the final gradient with respect to $\vz_{i-1}$, which itself depends in part on $(\vk_{i-1}, \vv_{i-1})$. By contrast, gradients with respect to $\vx$, $\vq$, the temporary key--value pairs, and the model parameters do not impose such immediate dependencies and can therefore be computed outside the loop in a batched manner via larger kernels. This substantially improves parallelism in the backward pass.



\section{Experiments}
\label{sec:expts}

We evaluate \methodname{} on synthetic tasks designed to stretch models' representation ability, as well as language modeling.

\subsection{Synthetic diagnostics}
We use the MAD suite~\citep{poli2024mad} as a diagnostic for hybrid architectures.
We also include the copy task~\citep{jelassi2024repeat} that is provably impossible to solve via models of finite memory (this classification includes all forms of RNNs, including SSMs~\cite{gu2024mamba}).
These diagnostics are not intended as long-range benchmarks; they isolate whether recurrence is being used in the intended way. Since we want to measure the effective depth of a layer, we compare one Transformer layer to one layer of \methodnameabv{}, otherwise preserving the same model configurations as in~\citep{poli2024mad}. The precise hyperparameter details are provided in Appendix \ref{app:hyper_synth}. The sequence-level accuracies are displayed in Figure~\ref{fig:synth} and show \methodnameabv s significantly outperform Transformers that do not achieve meaningful performance on any of the tasks.

\begin{figure*}[t]
  \centering
  \includegraphics[width=\textwidth]{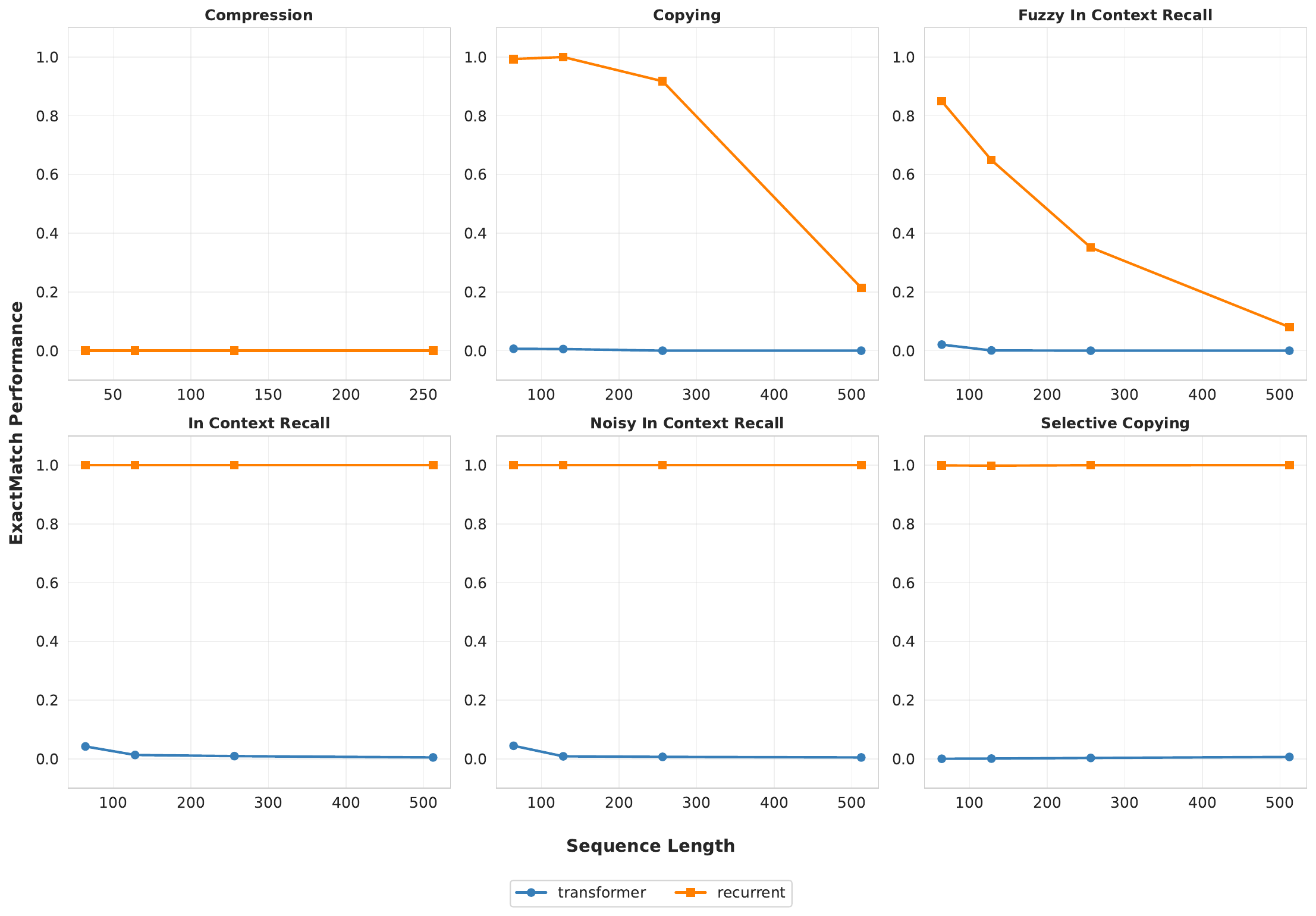}
  \caption{Sequence-level accuracy of the \methodname{} and a regular Transformer on MAD synthetic tasks and the copy task. \methodnameabv outperforms Transformers in all tasks but compression. Neither model achieves non-trivial performance on compression at sequence-level, but they do achieve meaningful token-level accuracy with \methodnameabv{} still in the lead, as shown in Appendix~\ref{app:synthetics-token-level}.}
  \label{fig:synth}
\end{figure*}

\begin{table}[t]
\centering
\caption{Downstream performance for the 300M model.}
\label{tab:c4-512-300m-downstream}
\begin{tabular}{lccccccc}
\toprule
Model & Layers & piqa CE & hellaswag CE & arc easy CE & openbook qa CE & sciq CE & winogrande CE  \\
\midrule
Transformer & 6 & $5.536$ & $4.334$ & $11.128$ & $12.408$ & $14.514$ & $8.413$ \\
Transformer & 12 & $5.508$ & $4.156$ & $10.894$ & $12.36$ & $14.15$ & $7.809$ \\
Recurrent & 12 & $\textbf{5.276}$ & $\textbf{4.052}$ & $\textbf{10.336}$ & $\textbf{11.557}$ & $13.384$ & $\textbf{7.628}$ \\
Recurrent & 6 & $5.356$ & $4.122$ & $10.388$ & $11.65$ & $\textbf{13.206}$ & $7.914$ \\
\bottomrule
\end{tabular}
\end{table}

\subsection{Language modeling on C4 (300M parameters)}
We implement the \methodname{} on top of the OLMo-2~\cite{olmo20242} codebase and pretrain 300M and 150M parameter models on C4~\citep{C4raffel2020exploring}, for $1\times$ Chinchilla tokens ($\approx 3b$ tokens). The precise hyperparameters are provided in Appendix \ref{app:hyper_c4}. Figure~\ref{fig:c4-300m} shows the cross-entropy loss for the 300M model during training for Transformers and \methodnames{} at $12$ layers ($24$ layers is the standard configuration used in previous works such as \citet{zhao2025deconstructing} which performs comparably as shown in Table~\ref{tab:c4-512-300m}) and parameter-equivalent $6$ layers ($d$ scaled up by $\sqrt{2}$ from $1408$ to $2048$). Table~\ref{tab:c4-512-300m} contains the final cross entropy values. As shown, \methodnameabv s outperform Transformers meaningfully by a delta cross-entropy of $0.03$ at $12$ layers and $0.057$ at $6$ layers.
Notably, the layerwise recurrence shifts the depth--width optimum at fixed parameter count. 

We evaluate the model's downstream performance on 6 multiple choice tasks used in OLMo \citep{groeneveld2024olmoacceleratingsciencelanguage}: PIQA \citep{Bisk2020}, Hellaswag \citep{Zellers2019}, ARC Easy \citep{Clark2018ARC}, OpenBookQA \citep{Mihaylov2018}, SciQ \citep{Welbl2017} and Winogrande \citep{Sakaguchi2020}.
We provide the model's CE loss values on the ground truth answers in Table~\ref{tab:c4-512-300m-downstream}. We used this metric as it is known to be smoother at small scales \citep{bhagia2025establishingtaskscalinglaws}. We also provide the downstream accuracies in Appendix Table~\ref{tab:c4-512-300m-downstream-acc}, but most of them are close to random noise (with Winogrande being close to 50\%, while most others being 6-7\% above random). The results for the 150M model are provided in Appendix~\ref{app:150m-pretrain}.

\subsection{Depth--width tradeoffs and decoding footprint}
For autoregressive decoding, \methodname{} exhibits essentially the same per-token attention behavior as a Transformer with comparable depth and width: each new token attends to cached keys and values computed from preceding tokens.
If, however, \methodname{} achieves comparable model quality using fewer layers—reduced by a factor of $\alpha$—while keeping the total parameter count fixed, the size of the key–value (KV) cache decreases by a factor of $\sqrt{\alpha}$. This follows from the corresponding increase in model width by only $\sqrt{\alpha}$.
A smaller KV cache directly reduces memory traffic during decoding, which can lead to higher throughput in bandwidth-limited settings. More broadly, trading off depth for width may be advantageous for decoding, since increased width can be more effectively parallelized using techniques such as tensor parallelism. We leave a detailed evaluation of fully optimized decoding latency to future work.


\section{Related Work}
\label{sec:related}

We group the most relevant prior work by the core bottleneck it imposes and by whether it introduces recurrence/feedback-like computation beyond standard feedforward self-attention.

\paragraph{Bounded-memory sequence models.}
Classical RNNs and modern state-space models maintain a fixed-size state that is updated recurrently \citep{bengio1994long,pascanu2013difficulty,LSTMhochreiter1997long,smith2022simplified,gu2024mamba}. Linear-attention/retention variants also admit recurrent formulations with bounded state \citep{katharopoulos2020transformers,sun2023retentive,peng2023rwkv}. While computationally attractive, bounded-state families cannot in general preserve information that scales with sequence length; \citep{jelassi2024repeat} highlight this limitation by proving such models cannot perform the copy task. This limitation is in contrast to \methodnameabv (as shown in Section~\ref{sec:repr-transformers}).

\paragraph{Segment recurrence and memory mechanisms.}
Transformer-XL and follow-ups process context in segments and potentially summarize them via attention mechanisms~\citep{dai2019transformerxl,rae1911compressive}. RMTs take this a step forward by summarizing the feedback information across segments~\citep{bulatov2022rmt}. These methods primarily address efficient long-context handling rather than layerwise recurrent computation over token states. Furthermore, they have the same limitation that classic RNNs have - namely bounded memory.

\paragraph{Recurrent/feedback Transformers.}
Feedback Transformers~\citep{fan2020feedback}, Staircase Attention~\citep{ju2021staircase}, and TransformerFAM~\citep{hwang2024transformerfam} introduce recurrent/feedback-style computation in Transformer blocks. As discussed in Section~\ref{sec:closest-related-work}, \methodname{} is \emph{layerwise} recurrent with separate per-layer key--value collections (rather than cross-layer shared feedback), and this structure is also what enables our efficient training/prefill (Section~\ref{sec:tiling}).

\section{Discussion and Conclusion}
\label{sec:conclusion}
In this work, we introduce \methodname{}, which integrates three key ideas: (i) a deeper-in-time representation within each layer, (ii) a path-based perspective that enables longer multi-hop influences while retaining direct attention access by operating closer to the vanishing-gradient regime, and (iii) an exact, I/O-aware evaluation algorithm that makes training and prefill practical by reducing memory traffic without altering the underlying computation. Our results clearly demonstrate the increased effective depth provided by introducing layerwise recurrence.

We view this work as a proof of concept that opens several directions for future research. As with any new architecture, the introduction of layerwise recurrence may alter tuning behavior and scaling laws, potentially shifting the optimal depth–width trade-off. In addition, the current design can be extended with blocking, in which recurrence is executed over blocks of steps, yielding a controllable trade-off between recurrent depth and training speed. Finally, while the proposed tiling algorithm is exact and delivers measurable gains, further improvements are likely achievable through fully optimized kernels and extensions of existing parallelization techniques, which we leave to future work.

In conclusion, layerwise recurrence provides a simple and principled mechanism for exposing additional temporal depth while retaining a memory that scales with sequence length and preserves the full representational capacity of the Transformer. When combined with an exact tiling strategy that enables computation reuse and reduces HBM traffic, \methodnames{} make recurrent-in-layer training and prefill feasible in practice and shift depth–width trade-offs at a fixed parameter count. This shift is also beneficial at decode time, where a reduced KV cache size leads to improved efficiency.



\section*{Acknowledgements}
DM, AM, MK acknowledge the support of a Kempner Institute Graduate Research Fellowship. The authors
acknowledge that this work has been made possible in part by a gift from the Chan Zuckerberg
Initiative Foundation to establish the Kempner Institute for the Study of Natural and Artificial
Intelligence. SK, CO and DM acknowledge support from the Office of Naval Research under award N0001422-1-2377 and the National Science Foundation Grant under award \#IIS 2229881. DM is also supported by a Simons Investigator Fellowship, NSF grant DMS-2134157, DARPA grant W911NF2010021,and DOE grant DE-SC0022199.

\bibliography{references}

\begin{thebibliography}{45}
\providecommand{\natexlab}[1]{#1}
\providecommand{\url}[1]{\texttt{#1}}
\expandafter\ifx\csname urlstyle\endcsname\relax
  \providecommand{\doi}[1]{doi: #1}\else
  \providecommand{\doi}{doi: \begingroup \urlstyle{rm}\Url}\fi

\bibitem[Bengio et~al.(1994)Bengio, Simard, and Frasconi]{bengio1994long}
Y.~Bengio, P.~Simard, and P.~Frasconi.
\newblock Learning long-term dependencies with gradient descent is difficult.
\newblock IEEE Transactions on Neural Networks, 1994.
\newblock Often cited via the 1994 journal version.

\bibitem[Bhagia et~al.(2025)Bhagia, Liu, Wettig, Heineman, Tafjord, Jha, Soldaini, Smith, Groeneveld, Koh, Dodge, and Hajishirzi]{bhagia2025establishingtaskscalinglaws}
A.~Bhagia, J.~Liu, A.~Wettig, D.~Heineman, O.~Tafjord, A.~H. Jha, L.~Soldaini, N.~A. Smith, D.~Groeneveld, P.~W. Koh, J.~Dodge, and H.~Hajishirzi.
\newblock Establishing task scaling laws via compute-efficient model ladders, 2025.
\newblock URL \url{https://arxiv.org/abs/2412.04403}.

\bibitem[Bisk et~al.(2020)Bisk, Zellers, Bras, Gao, and Choi]{Bisk2020}
Y.~Bisk, R.~Zellers, R.~L. Bras, J.~Gao, and Y.~Choi.
\newblock Piqa: Reasoning about physical commonsense in natural language.
\newblock In \emph{Proceedings of the AAAI Conference on Artificial Intelligence}, 2020.

\bibitem[Bordelon et~al.(2023)Bordelon, Noci, Li, Hanin, and Pehlevan]{bordelon2023depthwise}
B.~Bordelon, L.~Noci, M.~B. Li, B.~Hanin, and C.~Pehlevan.
\newblock Depthwise hyperparameter transfer in residual networks: Dynamics and scaling limit, 2023.

\bibitem[Bulatov et~al.(2022)Bulatov, Kuratov, and Burtsev]{bulatov2022rmt}
A.~Bulatov, Y.~Kuratov, and M.~Burtsev.
\newblock Recurrent memory transformer.
\newblock \emph{Advances in Neural Information Processing Systems}, 35:\penalty0 11079--11091, 2022.

\bibitem[Clark et~al.(2018)Clark, Cowhey, Etzioni, Khot, Sabharwal, Schoenick, and Tafjord]{Clark2018ARC}
P.~Clark, I.~Cowhey, O.~Etzioni, T.~Khot, A.~Sabharwal, C.~Schoenick, and O.~Tafjord.
\newblock Think you have solved question answering? try arc, the ai2 reasoning challenge.
\newblock In \emph{arXiv preprint arXiv:1803.05457}, 2018.

\bibitem[Dai et~al.(2019)Dai, Yang, Yang, et~al.]{dai2019transformerxl}
Z.~Dai, Z.~Yang, Y.~Yang, et~al.
\newblock Transformer-{XL}: Attentive language models beyond a fixed-length context.
\newblock \emph{arXiv preprint arXiv:1901.02860}, 2019.

\bibitem[Dao et~al.(2022)Dao, Fu, Ermon, Rudra, and R{\'e}]{dao2022flashattention}
T.~Dao, D.~Y. Fu, S.~Ermon, A.~Rudra, and C.~R{\'e}.
\newblock Flashattention: Fast and memory-efficient exact attention with io-awareness, 2022.

\bibitem[Dehghani et~al.(2023)Dehghani, Djolonga, Mustafa, Padlewski, Heek, Gilmer, Steiner, Caron, Geirhos, Alabdulmohsin, et~al.]{dehghani2023scalingQKNorm}
M.~Dehghani, J.~Djolonga, B.~Mustafa, P.~Padlewski, J.~Heek, J.~Gilmer, A.~P. Steiner, M.~Caron, R.~Geirhos, I.~Alabdulmohsin, et~al.
\newblock Scaling vision transformers to 22 billion parameters.
\newblock In \emph{International conference on machine learning}, pages 7480--7512. PMLR, 2023.

\bibitem[Fan et~al.(2020)Fan, Lavril, Grave, Joulin, and Sukhbaatar]{fan2020feedback}
A.~Fan, T.~Lavril, E.~Grave, A.~Joulin, and S.~Sukhbaatar.
\newblock Addressing some limitations of transformers with feedback memory.
\newblock \emph{arXiv preprint arXiv:2002.09402}, 2020.

\bibitem[Groeneveld et~al.(2024)Groeneveld, Beltagy, Walsh, Bhagia, Kinney, Tafjord, Jha, Ivison, Magnusson, Wang, Arora, Atkinson, Authur, Chandu, Cohan, Dumas, Elazar, Gu, Hessel, Khot, Merrill, Morrison, Muennighoff, Naik, Nam, Peters, Pyatkin, Ravichander, Schwenk, Shah, Smith, Strubell, Subramani, Wortsman, Dasigi, Lambert, Richardson, Zettlemoyer, Dodge, Lo, Soldaini, Smith, and Hajishirzi]{groeneveld2024olmoacceleratingsciencelanguage}
D.~Groeneveld, I.~Beltagy, P.~Walsh, A.~Bhagia, R.~Kinney, O.~Tafjord, A.~H. Jha, H.~Ivison, I.~Magnusson, Y.~Wang, S.~Arora, D.~Atkinson, R.~Authur, K.~R. Chandu, A.~Cohan, J.~Dumas, Y.~Elazar, Y.~Gu, J.~Hessel, T.~Khot, W.~Merrill, J.~Morrison, N.~Muennighoff, A.~Naik, C.~Nam, M.~E. Peters, V.~Pyatkin, A.~Ravichander, D.~Schwenk, S.~Shah, W.~Smith, E.~Strubell, N.~Subramani, M.~Wortsman, P.~Dasigi, N.~Lambert, K.~Richardson, L.~Zettlemoyer, J.~Dodge, K.~Lo, L.~Soldaini, N.~A. Smith, and H.~Hajishirzi.
\newblock Olmo: Accelerating the science of language models, 2024.
\newblock URL \url{https://arxiv.org/abs/2402.00838}.

\bibitem[Gu and Dao(2024)]{gu2024mamba}
A.~Gu and T.~Dao.
\newblock Mamba: Linear-time sequence modeling with selective state spaces.
\newblock In \emph{First conference on language modeling}, 2024.

\bibitem[Hochreiter and Schmidhuber(1997)]{LSTMhochreiter1997long}
S.~Hochreiter and J.~Schmidhuber.
\newblock Long short-term memory.
\newblock \emph{Neural computation}, 9\penalty0 (8):\penalty0 1735--1780, 1997.

\bibitem[Hwang et~al.(2024)Hwang, Wang, Huo, Sim, and Moreno~Mengibar]{hwang2024transformerfam}
D.~Hwang, W.~Wang, Z.~Huo, K.~C. Sim, and P.~Moreno~Mengibar.
\newblock Transformerfam: Feedback attention is working memory.
\newblock \emph{arXiv preprint arXiv:2404.09173}, 2024.

\bibitem[Jelassi et~al.(2024)Jelassi, Brandfonbrener, Kakade, and Malach]{jelassi2024repeat}
S.~Jelassi, D.~Brandfonbrener, S.~M. Kakade, and E.~Malach.
\newblock Repeat after me: Transformers are better than state space models at copying.
\newblock \emph{arXiv preprint arXiv:2402.01032}, 2024.

\bibitem[Ju et~al.(2021)Ju, Roller, Sukhbaatar, and Weston]{ju2021staircase}
D.~Ju, S.~Roller, S.~Sukhbaatar, and J.~Weston.
\newblock Staircase attention for recurrent processing of sequences.
\newblock \emph{arXiv preprint arXiv:2106.04279}, 2021.

\bibitem[Katharopoulos et~al.(2020)Katharopoulos, Vyas, Pappas, and Fleuret]{katharopoulos2020transformers}
A.~Katharopoulos, A.~Vyas, N.~Pappas, and F.~Fleuret.
\newblock Transformers are rnns: Fast autoregressive transformers with linear attention.
\newblock In \emph{International conference on machine learning}, pages 5156--5165. PMLR, 2020.

\bibitem[Liu et~al.(2023)Liu, Ash, Goel, Krishnamurthy, and Zhang]{liu2022shortcuts}
B.~Liu, J.~Ash, S.~Goel, A.~Krishnamurthy, and C.~Zhang.
\newblock Transformers learn shortcuts to automata.
\newblock In \emph{ICLR}, 2023.
\newblock arXiv:2210.10749.

\bibitem[McCandlish et~al.(2018)McCandlish, Kaplan, Amodei, and Team]{McCandlish2018Empirical}
S.~McCandlish, J.~Kaplan, D.~Amodei, and O.~Team.
\newblock An empirical model of large-batch training.
\newblock \emph{arXiv preprint arXiv:1812.06162}, 2018.

\bibitem[Merrill et~al.(2022)Merrill, Sabharwal, and Smith]{merrill2022saturated}
W.~Merrill, A.~Sabharwal, and N.~A. Smith.
\newblock Saturated transformers are constant-depth threshold circuits.
\newblock \emph{Transactions of the Association for Computational Linguistics}, 10:\penalty0 843--856, 2022.
\newblock \doi{10.1162/tacl_a_00493}.
\newblock URL \url{https://aclanthology.org/2022.tacl-1.49/}.

\bibitem[Mihaylov et~al.(2018)Mihaylov, Clark, Khot, and Sabharwal]{Mihaylov2018}
T.~Mihaylov, P.~Clark, T.~Khot, and A.~Sabharwal.
\newblock Can a suit of armor conduct electricity? a new dataset for open book question answering.
\newblock In \emph{Proceedings of the EMNLP}, 2018.

\bibitem[OLMo et~al.(2024)OLMo, Walsh, Soldaini, Groeneveld, Lo, Arora, Bhagia, Gu, Huang, Jordan, et~al.]{olmo20242}
T.~OLMo, P.~Walsh, L.~Soldaini, D.~Groeneveld, K.~Lo, S.~Arora, A.~Bhagia, Y.~Gu, S.~Huang, M.~Jordan, et~al.
\newblock 2 olmo 2 furious.
\newblock \emph{arXiv preprint arXiv:2501.00656}, 2024.

\bibitem[Oncescu et~al.(2025)Oncescu, Purandare, Idreos, and Kakade]{flashInference}
C.-A. Oncescu, S.~J. Purandare, S.~Idreos, and S.~Kakade.
\newblock Flash inference: Near linear time inference for long convolution sequence models and beyond.
\newblock In Y.~Yue, A.~Garg, N.~Peng, F.~Sha, and R.~Yu, editors, \emph{International Conference on Learning Representations}, volume 2025, pages 49732--49757, 2025.
\newblock URL \url{https://proceedings.iclr.cc/paper_files/paper/2025/file/7c818dd40651b420873af70b8a790e3f-Paper-Conference.pdf}.

\bibitem[Orvieto et~al.(2023)Orvieto, Smith, Gu, Fernando, Gulcehre, Pascanu, and De]{orvieto2023resurrecting}
A.~Orvieto, S.~L. Smith, A.~Gu, A.~Fernando, C.~Gulcehre, R.~Pascanu, and S.~De.
\newblock Resurrecting recurrent neural networks for long sequences.
\newblock In \emph{International Conference on Machine Learning}, pages 26670--26698. PMLR, 2023.

\bibitem[Pascanu et~al.(2013)Pascanu, Mikolov, and Bengio]{pascanu2013difficulty}
R.~Pascanu, T.~Mikolov, and Y.~Bengio.
\newblock On the difficulty of training recurrent neural networks, 2013.

\bibitem[Paszke et~al.(2017)Paszke, Gross, Chintala, Chanan, Yang, DeVito, Lin, Desmaison, Antiga, and Lerer]{paszke2017automaticPyTorch}
A.~Paszke, S.~Gross, S.~Chintala, G.~Chanan, E.~Yang, Z.~DeVito, Z.~Lin, A.~Desmaison, L.~Antiga, and A.~Lerer.
\newblock Automatic differentiation in pytorch.
\newblock In \emph{NIPS-W}, 2017.

\bibitem[Peng et~al.(2023)Peng, Alcaide, Anthony, Albalak, Arcadinho, Biderman, Cao, Cheng, Chung, Grella, et~al.]{peng2023rwkv}
B.~Peng, E.~Alcaide, Q.~Anthony, A.~Albalak, S.~Arcadinho, S.~Biderman, H.~Cao, X.~Cheng, M.~Chung, M.~Grella, et~al.
\newblock Rwkv: Reinventing rnns for the transformer era.
\newblock \emph{arXiv preprint arXiv:2305.13048}, 2023.

\bibitem[Poli et~al.(2024)Poli, Thomas, Nguyen, Ponnusamy, Deiseroth, Kersting, Suzuki, Hie, Ermon, Re, Zhang, and Massaroli]{poli2024mad}
M.~Poli, A.~W. Thomas, E.~Nguyen, P.~Ponnusamy, B.~Deiseroth, K.~Kersting, T.~Suzuki, B.~Hie, S.~Ermon, C.~Re, C.~Zhang, and S.~Massaroli.
\newblock Mechanistic design and scaling of hybrid architectures.
\newblock In \emph{Forty-first International Conference on Machine Learning}, 2024.
\newblock URL \url{https://openreview.net/forum?id=GDp7Gyd9nf}.

\bibitem[Press et~al.(2022)Press, Smith, and Lewis]{press2022trainshorttestlong}
O.~Press, N.~A. Smith, and M.~Lewis.
\newblock Train short, test long: Attention with linear biases enables input length extrapolation, 2022.
\newblock URL \url{https://arxiv.org/abs/2108.12409}.

\bibitem[Rabe and Staats(2021)]{rabe2021blockAttention}
M.~N. Rabe and C.~Staats.
\newblock Self-attention does not need {$ O (n^2) $} memory.
\newblock \emph{arXiv preprint arXiv:2112.05682}, 2021.

\bibitem[Rae et~al.(1911)Rae, Potapenko, Jayakumar, Hillier, and Lillicrap]{rae1911compressive}
J.~W. Rae, A.~Potapenko, S.~M. Jayakumar, C.~Hillier, and T.~P. Lillicrap.
\newblock Compressive transformers for long-range sequence modelling. arxiv preprint, 2019.
\newblock \emph{URL https://arxiv. org/abs}, 1911.

\bibitem[Raffel et~al.(2020)Raffel, Shazeer, Roberts, Lee, Narang, Matena, Zhou, Li, and Liu]{C4raffel2020exploring}
C.~Raffel, N.~Shazeer, A.~Roberts, K.~Lee, S.~Narang, M.~Matena, Y.~Zhou, W.~Li, and P.~J. Liu.
\newblock Exploring the limits of transfer learning with a unified text-to-text transformer.
\newblock \emph{Journal of machine learning research}, 21\penalty0 (140):\penalty0 1--67, 2020.

\bibitem[Sakaguchi et~al.(2020)Sakaguchi, Bras, Bhagavatula, and Choi]{Sakaguchi2020}
K.~Sakaguchi, R.~L. Bras, C.~Bhagavatula, and Y.~Choi.
\newblock Winogrande: An adversarial winograd schema challenge at scale.
\newblock In \emph{Proceedings of the AAAI Conference on Artificial Intelligence}, 2020.

\bibitem[Shallue et~al.(2018)Shallue, Lee, Antognini, Sohl-Dickstein, Frostig, and Dahl]{Shallue2018Measuring}
C.~J. Shallue, J.~Lee, J.~Antognini, J.~Sohl-Dickstein, R.~Frostig, and G.~E. Dahl.
\newblock Measuring the effects of data parallelism on neural network training.
\newblock \emph{arXiv preprint arXiv:1811.03600}, 2018.

\bibitem[Smith et~al.(2022)Smith, Warrington, and Linderman]{smith2022simplified}
J.~T. Smith, A.~Warrington, and S.~W. Linderman.
\newblock Simplified state space layers for sequence modeling.
\newblock \emph{arXiv preprint arXiv:2208.04933}, 2022.

\bibitem[Sun et~al.(2023)Sun, Dong, Huang, Ma, Xia, Xue, Wang, and Wei]{sun2023retentive}
Y.~Sun, L.~Dong, S.~Huang, S.~Ma, Y.~Xia, J.~Xue, J.~Wang, and F.~Wei.
\newblock Retentive network: A successor to transformer for large language models.
\newblock \emph{arXiv preprint arXiv:2307.08621}, 2023.

\bibitem[Vaswani et~al.(2017)Vaswani, Shazeer, Parmar, et~al.]{vaswani2017attention}
A.~Vaswani, N.~Shazeer, N.~Parmar, et~al.
\newblock Attention is all you need.
\newblock In \emph{NeurIPS}, 2017.

\bibitem[Welbl et~al.(2017)Welbl, Liu, and Gardner]{Welbl2017}
J.~Welbl, N.~F. Liu, and M.~Gardner.
\newblock Crowdsourcing multiple choice science questions.
\newblock In \emph{Proceedings of the Workshop on Noisy User-generated Text (WNUT)}, 2017.

\bibitem[Williams et~al.(2009)Williams, Waterman, and Patterson]{williams2009roofline}
S.~Williams, A.~Waterman, and D.~Patterson.
\newblock Roofline: An insightful visual performance model for multicore architectures.
\newblock \emph{Communications of the ACM}, 52\penalty0 (4):\penalty0 65--76, 2009.
\newblock \doi{10.1145/1498765.1498785}.

\bibitem[Xiong et~al.(2020)Xiong, Yang, He, Zheng, Zheng, Xing, Zhang, Lan, Wang, and Liu]{xiong2020layerPreLN}
R.~Xiong, Y.~Yang, D.~He, K.~Zheng, S.~Zheng, C.~Xing, H.~Zhang, Y.~Lan, L.~Wang, and T.~Liu.
\newblock On layer normalization in the transformer architecture.
\newblock In \emph{International conference on machine learning}, pages 10524--10533. PMLR, 2020.

\bibitem[Yang et~al.(2023)Yang, Yu, Zhu, and Hayou]{yang2023tensor}
G.~Yang, D.~Yu, C.~Zhu, and S.~Hayou.
\newblock Tensor programs vi: Feature learning in infinite-depth neural networks, 2023.

\bibitem[Zellers et~al.(2019)Zellers, Bisk, Farhadi, and Choi]{Zellers2019}
R.~Zellers, Y.~Bisk, A.~Farhadi, and Y.~Choi.
\newblock Hellaswag: Can a machine really finish your sentence?
\newblock In \emph{Proceedings of the Annual Meeting of the Association for Computational Linguistics (ACL)}, 2019.

\bibitem[Zhang and Sennrich(2019)]{RMSNorm}
B.~Zhang and R.~Sennrich.
\newblock Root mean square layer normalization.
\newblock In H.~Wallach, H.~Larochelle, A.~Beygelzimer, F.~d\textquotesingle Alch\'{e}-Buc, E.~Fox, and R.~Garnett, editors, \emph{Advances in Neural Information Processing Systems}, volume~32. Curran Associates, Inc., 2019.
\newblock URL \url{https://proceedings.neurips.cc/paper_files/paper/2019/file/1e8a19426224ca89e83cef47f1e7f53b-Paper.pdf}.

\bibitem[Zhang et~al.(2025)Zhang, Morwani, Vyas, Wu, Zou, Ghai, Foster, and Kakade]{zhang2025how}
H.~Zhang, D.~Morwani, N.~Vyas, J.~Wu, D.~Zou, U.~Ghai, D.~Foster, and S.~M. Kakade.
\newblock How does critical batch size scale in pre-training?
\newblock In \emph{The Thirteenth International Conference on Learning Representations}, 2025.
\newblock URL \url{https://openreview.net/forum?id=JCiF03qnmi}.

\bibitem[Zhao et~al.(2025)Zhao, Morwani, Brandfonbrener, Vyas, and Kakade]{zhao2025deconstructing}
R.~Zhao, D.~Morwani, D.~Brandfonbrener, N.~Vyas, and S.~M. Kakade.
\newblock Deconstructing what makes a good optimizer for autoregressive language models.
\newblock In \emph{The Thirteenth International Conference on Learning Representations}, 2025.
\newblock URL \url{https://openreview.net/forum?id=zfeso8ceqr}.

\end{thebibliography}
\bibliographystyle{abbrvnat}
\newpage
\appendix
\onecolumn

\section{Simulating Transformers with \methodname}

\subsection{Transformer Generalization Theorem Statement}
\label{app:transformer-sim}
The approximate part of the Informal Theorem~\ref{thm:infrml-gen} refers to the statement applying exactly when no $\RMS$s are used. This is only a small technicality required to make the statement exact. We restate both architectures without $\RMS$s (including inside attention projections and the MLP) and give an exact representation construction in this setting.

\paragraph{Norm-free architectures.}
\emph{Transformer (width $d'$).} Given inputs $\vx_1^T,\ldots,\vx_N^T\in\R^{d'}$ and parameters
$Q^T,K^T,V^T\in\R^{d'\times d'}$ and $\MLP^T:\R^{d'}\to\R^{d'}$, then the outputs $\vy_1^T,\ldots,\vy_N^T\in\R^{d'}$ are computed by:
\begin{align*}
& \vq_i^T = Q^T\vx_i^T \qquad \vk_i^T = K^T\vx_i^T \qquad \vv_i^T = V^T\vx_i^T \\
& \va_i^T = \Attn\big((\vk_1^T,\vv_1^T),\ldots,(\vk_i^T,\vv_i^T),\vq_i^T\big) \\
& \vy_i^T = \vx_i^T + \va_i^T + \MLP^T[\vx_i^T+\va_i^T] .
\end{align*}

\emph{\methodname{} (width $d$).} Given inputs $\vx_1,\ldots,\vx_N\in\R^{d}$ and parameters
$Q,K,V\in\R^{d\times d}$ and $\MLP:\R^{d}\to\R^{d}$, the $\vy_1,\ldots,\vy_N\in\R^{d}$ are defined via:
\begin{align*}
& \vq_i = Q\vx_i \qquad \vk_i^{\temp} = K\vx_i \qquad \vv_i^{\temp} = V\vx_i \\
& \va_i = \Attn\big((\vk_1,\vv_1),\ldots,(\vk_{i-1},\vv_{i-1}),(\vk_i^{\temp},\vv_i^{\temp}),\vq_i\big) \\
& \vz_i = \vx_i + \va_i + \MLP[\vx_i+\va_i] \\
& \vk_i = K\vz_i \qquad \vv_i = V\vz_i .
\end{align*}

\begin{theorem}[Transformer Generalization]
\label{thm:transformer-containment-normfree}
Assuming neither architecture uses $\RMS$s, any width-$d'$ Transformer (of arbitrary depth) can be simulated by a width-$d=3d'$ \methodname{} of as many layers.
There exists a parameterization of \methodnameabv{} such that Transformer's activations are embedded into disjoint feature groups of the \methodnameabv's ones across layers for any input sequence. This is achieved while ensuring that:
(i) attention scores match those of the Transformer at every position and every layer
and (ii) the layer output exactly tracks the Transformer layer output.
\end{theorem}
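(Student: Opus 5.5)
The plan is to give an explicit parameterization. Split $\R^{3d'}$ into three coordinate blocks, $\R^{d'}\oplus\R^{d'}\oplus\R^{d'}$, numbered $1,2,3$. Block $2$ is a scratch block that holds the attention output. Blocks $1$ and $3$ alternate between layers: one is the \emph{live} block, holding the current Transformer state, and the other is the \emph{spare} block. The proof is an induction over layers with the following invariant:
\begin{quote}
the input $\vx_i$ to \methodnameabv{} layer $\ell$ has the Transformer's layer-$\ell$ input $\vx_i^T$ in the live block $s_\ell\in\{1,3\}$, has $0$ in block $2$, and has arbitrary content $\vc_i$ in the spare block $s'_\ell$, where $\{s_\ell,s'_\ell\}=\{1,3\}$.
\end{quote}
At the first layer we embed $\vx_i=(\vx_i^T,0,0)$, and the live block alternates with $s_{\ell+1}=s'_\ell$.

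For layer $\ell$ I would choose the parameters as follows. Let $Q$ and $K$ read only block $s_\ell$, applying $Q^T$ and $K^T$ and placing the result in any fixed block. Let $V$ read block $s_\ell$ and write $V^T(\cdot)$ into block $2$; all other entries of $Q,K,V$ are zero. Then $\vq_i=Q^T\vx_i^T$ and $\vk_i^{\temp}=K^T\vx_i^T$ (up to the zero padding), and $\vv_i^{\temp}$ is $V^T\vx_i^T$ in block $2$. Next, let the MLP read only blocks $s_\ell$ and $2$ of its input $\vx_i+\va_i$, and output three things: $-\,$(block $2$ content) into block $2$; $-\vc_i+\big(\vu+\MLP^T[\vu]\big)$ into block $s'_\ell$, where $\vu$ is the sum of the block-$s_\ell$ and block-$2$ contents; and $0$ into block $s_\ell$. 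The term $-\vc_i$ needs the MLP to also read block $s'_\ell$. This is allowed, since $\vc_i$ sits in $\vx_i+\va_i$ untouched by $\va_i$.

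Then I verify the induction step one position at a time, in increasing order of $i$, with an inner induction over positions. Suppose that for all $j<i$ the persistent pairs satisfy $\vk_j=K^T\vx_j^T$ and that $\vv_j$ is $V^T\vx_j^T$ in block $2$. Then every score $\innerp{\vk_j}{\vq_i}$ equals the Transformer's, and the temporary score does too. So the weights agree, and $\va_i=(0,\va_i^T,0)$ in block order $(s_\ell,2,s'_\ell)$. This gives claim (i).

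The MLP input therefore has $\vx_i^T$ in block $s_\ell$ and $\va_i^T$ in block $2$, so $\vu=\vx_i^T+\va_i^T$. Summing the residual terms gives
\begin{itemize}
\item block $s_\ell$ of $\vz_i$ equal to $\vx_i^T$, untouched;
\item block $2$ equal to $\va_i^T-\va_i^T=0$;
\item block $s'_\ell$ equal to $\vc_i-\vc_i+\vy_i^T=\vy_i^T$.
\end{itemize}
Because $K$ and $V$ read only block $s_\ell$, the persistent pair $\vk_i=K\vz_i$, $\vv_i=V\vz_i$ again equals the Transformer's pair computed from $\vx_i^T$. This closes the inner induction. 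The output $\vz_i$ has $\vy_i^T$ in the new live block $s'_\ell$, $0$ in block $2$, and the stale $\vx_i^T$ in the new spare block. That is exactly the invariant for layer $\ell+1$, and it gives claim (ii) with the output tracked exactly in a designated block.

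This also explains why three blocks are needed rather than two. The protected copy of $\vx_i^T$ that the persistent keys and values read must survive into $\vz_i$. It then becomes stale content at the next layer, which must be overwritten rather than added to.

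The main obstacle I expect is the expressiveness assumption on $\MLP$. The construction needs the MLP to realize, exactly, the map
\[
(\vc,\vw,\vb)\mapsto\big(0,\,-\vb,\,-\vc+(\vw+\vb)+\MLP^T[\vw+\vb]\big)
\]
written blockwise, with $\vw$ the live-block content and $\vb$ the scratch content. This map combines linear pass-through and cancellation terms with the original $\MLP^T$. I would state the MLP class explicitly: a two-layer network with ReLU, say, with hidden width at least that of $\MLP^T$ plus $O(d')$. The linear parts would be realized exactly using $t=\mathrm{ReLU}(t)-\mathrm{ReLU}(-t)$ on extra hidden units, while the original hidden units compute $\MLP^T$ on the reconstructed input $\vw+\vb$ via a suitable first-layer matrix. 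The remaining conditions are only bookkeeping. The zero padding of $Q$ and $K$ must not change inner products, and the norm-free setting is what guarantees this.
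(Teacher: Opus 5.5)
Your proposal is correct and is essentially the paper's construction. Your live, scratch and spare blocks play the roles of the paper's carry, scratch and live blocks. The MLP cancels the scratch content and overwrites the stale block with $\vy_i^T$, exactly as in the paper, and your alternation $s_{\ell+1}=s'_\ell$ is the explicit form of the paper's carry--live swap absorbed into the next layer's parameters. Your remark on realizing the required map exactly with a ReLU $\MLP$ makes explicit an expressiveness assumption that the paper leaves implicit, since it simply defines $\MLP$ on the relevant inputs.
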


\subsection{Proof}
\label{app:transformer-sim-proof}

\paragraph{Three blocks and the per-layer invariant.}
Let $d=3d'$ and decompose $\R^{d}$ into three $d'$-dimensional blocks
\begin{align*}
& \R^{3d'} = \mathcal{C}\oplus\mathcal{L}\oplus\mathcal{S} .
\end{align*}
We call them \emph{carry} ($\mathcal{C}$), \emph{live} ($\mathcal{L}$) and \emph{scratch} ($\mathcal{S}$).
Carry is the only block that $K$ and $V$ read from (so both temporary and persistent key/value pairs depend on it); live holds the next-layer activation and scratch holds attention outputs so residual addition does not corrupt carry.

Fix one Transformer layer (parameters $Q^T,K^T,V^T,\MLP^T$) and assume, for every position $1 \leq i \leq N$ that:
\begin{align*}
& \vx_i = (\vx_i^T \; ; \; * \; ; \; 0) .
\end{align*}
We will construct \methodnameabv's layer parameters ($Q,K,V,\MLP$) so that the output satisfies:
\begin{align*}
& \vz_i = (\vx_i^T \; ; \; \vy_i^T \; ; \; 0) .
\end{align*}
Then a swap between the dimensional blocks $\mathcal{C}$ and $\mathcal{L}$  restores the same input form (namely, the output follows the $(\vy_i^T \; ; \; * \; ; \; 0)$ pattern following the swap) enabling stacking. Define block-sparse linear maps
\begin{align*}
& Q(\vc \; ; \; \vl \; ; \; \vs) = (Q^T\vc \; ; \; 0 \; ; \; 0) \\
& K(\vc \; ; \; \vl \; ; \; \vs) = (K^T\vc \; ; \; 0 \; ; \; 0) \\
& V(\vc \; ; \; \vl \; ; \; \vs) = (0 \; ; \; 0 \; ; \; V^T\vc) .
\end{align*}
Because $K$ and $V$ read only from carry the fact that they are shared between temporary and persistent pairs is automatically respected.

\paragraph{Attention matches (induction on position).}
We prove by induction on $i$ that attention scores match and that the attention output lands in scratch.

Induction hypothesis: for all $j<i$ the persistent pairs match the embedded Transformer pairs
\begin{align*}
& \vk_j = (\vk_j^T \; ; \; 0 \; ; \; 0) \qquad \vv_j = (0 \; ; \; 0 \; ; \vv_j^T) .
\end{align*}
Using the input form $\vx_i = (\vx_i^T \; ; \; * \; ; \; 0)$ we have
\begin{align*}
& \vq_i = (\vq_i^T \; ; \; 0 \; ; \; 0)
\qquad
\vk_i^{\temp} = (\vk_i^T \; ; \; 0 \; ; \; 0)
\qquad
\vv_i^{\temp} = (0 \; ; \; 0 \; ; \vv_i^T) .
\end{align*}
Therefore the logits $\innerp{\vk_j}{\vq_i}$ match $\innerp{\vk_j^T}{\vq_i^T}$ for all $j\le i$ and the attention output matches as well
\begin{align*}
& \va_i = (0 \; ; \; 0 \; ; \va_i^T) .
\end{align*}

It thus follows that $\vx_i+\va_i = (\vx_i^T \; ; \; * \; ; \; \va_i^T)$.
Define $\MLP[\cdot]$ on such inputs so that:
\begin{align*}
& \MLP[(\vx_i^T \; ; \; * \; ; \; \va_i^T)]
=
(0 \; ; \; \vx_i^T + \va_i^T + \MLP^T[\vx_i^T+\va_i^T] - *\; ; \; -\va_i^T) .
\end{align*}
Substituting into the \methodname{} update shows why this choice is natural
\begin{align*}
& \vz_i = \vx_i + \va_i + \MLP[\vx_i+\va_i] \\
& \vz_i
= (\vx_i^T \; ; \; * \; ; \; 0) + (0 \; ; \; 0 \; ; \va_i^T)
+ (0 \; ; \; \vx_i^T + \va_i^T + \MLP^T[\vx_i^T+\va_i^T] - * \; ; \; -\va_i^T) \\
& \vz_i
= (\vx_i^T \; ; \; \vx_i^T + \va_i^T + \MLP^T[\vx_i^T+\va_i^T] \; ; \; 0) \\
& \vz_i = (\vx_i^T \; ; \; \vy_i^T \; ; \; 0) .
\end{align*}
In particular the persistent pairs for position $i$ satisfy
\begin{align*}
& \vk_i = K\vz_i = (\vk_i^T \; ; \; 0 \; ; \; 0)
\qquad
\vv_i = V\vz_i = (0 \; ; \; 0 \; ; \vv_i^T) .
\end{align*}
This closes the induction and proves equality of attention scores at every position within the layer.

\paragraph{Stacking layers.}
After one layer we have $\vz_i = (\vx_i^T \; ; \; \vy_i^T \; ; \; 0)$.
To simulate the next Transformer layer, the next \methodname{} layer must see carry equal to $\vy_i^T$ while scratch remains $0$.
Swapping carry and live between layers, we get:
\begin{align*}
& (\vx_i^T \; ; \; \vy_i^T \; ; \; 0)\mapsto(\vy_i^T \; ; \; \vx_i^T \; ; \; 0) .
\end{align*}
This restores the input form $\vx_i = (\vx_i^T \; ; \; * \; ; \; 0)$ for next layer since its input is current layer's output $\vy_i^T$.
Since each \methodname{} layer has its own parameters the swap can be absorbed into the next layer's $(Q,K,V,\MLP)$ choice.
Iterating over depth completes the simulation of an arbitrary-depth Transformer.

\paragraph{Remark (single layer vs stacking and why $3d'$ is needed).}
A single layer can be simulated with $2d'$ by preserving carry and writing the output elsewhere.
Stacking forces an additional scratch subspace: attention outputs must be representable and cancelable without corrupting the carry block that $K,V$ read while the live block stores the next-layer activation.
This is why the clean exact construction uses $3d'$.

\newpage
\section{Training Stability of \methodname} \label{app:train_stable}
The proof for Theorem \ref{thm:train_stable} is provided below.

\begin{proof}
Given the update, we can see
\[ \frac{\partial z_k}{\partial x_1} = \frac{\alpha}{k} V \sum_{j=1}^{k-1} \frac{\partial z_j}{\partial x_1} \]

Moreover, 
\[ \frac{\partial z_1}{\partial x_1} = I + \alpha V \]
Let's denote $\frac{\partial z_k}{\partial x_1}$ by $f(k)$. 
Define
\[
S(k):=\sum_{j=1}^k f(j).
\]
Then for $k\ge 2$, the recurrence gives
\[
f(k)=\frac{\alpha}{k}V\,S(k-1).
\]
Hence
\begin{align*}
S(k)
&=S(k-1)+f(k)\\
&=S(k-1)+\frac{\alpha}{k}V\,S(k-1)\\
&=\left(I+\frac{\alpha}{k}V\right)S(k-1).
\end{align*}
Since
\[
S(1)=f(1)=I+\alpha V,
\]
we obtain by iterating the above relation that
\[
S(k)=\prod_{m=1}^k \left(I+\frac{\alpha}{m}V\right).
\]
Therefore, for $k\ge 2$,
\begin{align*}
f(k)
&=S(k)-S(k-1)\\
&=\left(I+\frac{\alpha}{k}V\right)S(k-1)-S(k-1)\\
&=\frac{\alpha}{k}V\,S(k-1)\\
&=\frac{\alpha}{k}V\prod_{m=1}^{k-1}\left(I+\frac{\alpha}{m}V\right).
\end{align*}

Now set $z=\alpha V$. Then
\begin{align*}
f(k)
&=\frac{z}{k}\prod_{m=1}^{k-1}\left(I+\frac{z}{m}\right)\\
&=\frac{z}{k}\prod_{m=1}^{k-1}\frac{z+mI}{m}\\
&=\frac{z}{k}\cdot \frac{1}{(k-1)!}\prod_{m=1}^{k-1}(z+mI)\\
&=\frac{1}{k!}\prod_{m=0}^{k-1}(z+mI).
\end{align*}
Substituting back $z=\alpha V$ gives
\[
f(k)=\frac{1}{k!}\prod_{m=0}^{k-1}(\alpha V+mI).
\]

Finally, we use the standard rising-factorial expansion
\[
x(x+1)\cdots(x+k-1)=\sum_{r=0}^k {k \brack r}x^r,
\]
where ${k \brack r}$ are the unsigned Stirling numbers of the first kind or the total number of permutations of $k$ elements with exactly $r$ cycles. Replacing the scalar variable $x$ by the matrix $\alpha V$, we obtain
\[
\prod_{m=0}^{k-1}(\alpha V+mI)=\sum_{r=0}^k {k \brack r}\,\alpha^r V^r.
\]
Since ${k \brack 0}=0$ for $k\ge 1$, this becomes
\[
\prod_{m=0}^{k-1}(\alpha V+mI)=\sum_{r=1}^k {k \brack r}\,\alpha^r V^r.
\]
Therefore,
\[
f(k)=\frac{1}{k!}\sum_{r=1}^k {k \brack r}\,\alpha^r V^r,
\]
as claimed.
\end{proof}

\newpage
\section{More on computational efficiency}
\label{app:complete-algo}
We start by providing the full forward-pass algorithm for evaluating one \methodnameabv{} layer exactly (Algorithm~\ref{alg:rt-fwd}).
The schedule follows Figure~\ref{fig:tiling}: persistent key--value pairs $(\vk_t,\vv_t)$ are revealed sequentially (only after $\vz_t$ is computed), but queries $\{\vq_i\}_{i=1}^N$ are available from the very beginning.
We take advantage of this by immediately attenting to newly-available key--value pairs across an entire range of future queries rather than only the next query, increasing KV-access reuse while preserving the model's exact computation.

Concretely, when token $t$ finishes, the tile size is chosen as the largest power of $2$, $P$ that divides $t$.
Algorithm~\ref{alg:rt-fwd} immediately incorporates the contribution of $(\vk_{t-P+1:t},\vv_{t-P+1:t})$ into the attention accumulators of the next query block $q_{t+1:t+P}$.
Over the full run, every query position accumulates contributions from every earlier key--value pair exactly once, matching naive causal attention up to floating-point reordering effects.

\begin{algorithm}[t]
\caption{Exact tiled forward pass for one \methodname{} layer (training/prefill)}
\label{alg:rt-fwd}
\begin{algorithmic}[1]
\REQUIRE Inputs $\vx_{1:N}$ for a single layer
\REQUIRE Projections $Q,K,V$ and MLP block $\MLP$
\REQUIRE Query tile size $B$ (power of $2$)
\ENSURE Outputs $\vz_{1:N}$ and persistent pairs $(\vk_{1:N},\vv_{1:N})$

\STATE Compute queries in parallel: $\vq_i \leftarrow \qkRMS[Q\,\RMS(\vx_i)]$ for $i=1,\ldots,N$
\STATE Initialize running stats for all queries:
\STATE $m_i \leftarrow -\infty$ \quad $l_i \leftarrow 0$ \quad $\vo_i \leftarrow 0$ for $i=1,\ldots,N$
\STATE Initialize persistent buffers $\vk_{1:N},\vv_{1:N}$ as empty

\FOR{$t=1: N$}
    \STATE $\vk_t^{\temp} \leftarrow \qkRMS[K\,\RMS(\vx_t)]$ \qquad $\vv_t^{\temp} \leftarrow V\,\RMS(\vx_t)$
    \STATE $\textsc{UpdateTile}\big(q_{t:t}\;,\; \vk_t^{\temp}\;,\;\vv_t^{\temp}\big)$ \COMMENT{temporary self contribution}

    \STATE $\va_t \leftarrow \vo_t / l_t$
    \STATE $\vz_t \leftarrow \vx_t + \va_t + \MLP[\RMS(\vx_t+\va_t)]$
    \STATE $\vk_t \leftarrow \qkRMS[K\,\RMS(\zmark{\vz_t})]$ \qquad $\vv_t \leftarrow V\,\RMS(\zmark{\vz_t})$ \COMMENT{persistent KV pair revealed}

    \STATE $P \leftarrow 2^{\nu_2(t)}$ \COMMENT{largest power of $2$ dividing $t$}
    \STATE $(u, v) \leftarrow (t+1, \min(t+P,N)))$
    \IF{$u \le v$}
        \STATE $\textsc{UpdateTile}\big(q_{u:v}\;,\;\vk_{t-P+1:t}\;,\;\vv_{t-P+1:t}\big)$
        \COMMENT{have the next query block attend to the newly-available KV segment}
    \ENDIF
\ENDFOR
\end{algorithmic}
\end{algorithm}

\begin{algorithm}
\caption{$\textsc{UpdateTile}(q_{u:v},\vk_{s:e},\vv_{s:e})$: online-softmax update for a query tile}
\label{alg:update-tile}
\begin{algorithmic}[1]
\REQUIRE Query indices $u\!:\!v$ with queries $\vq_{u:v}$
\REQUIRE A key--value tile $\vk_{s:e},\vv_{s:e}$ (persistent) or a single temporary pair $(\vk_t^{\temp},\vv_t^{\temp})$
\REQUIRE Running stats $(m_{u:v},l_{u:v},\vo_{u:v})$
\ENSURE Updated $(m_{u:v},l_{u:v},\vo_{u:v})$ corresponding to including this tile

\STATE Compute tile logits $\alpha_{i,j} \leftarrow \innerp{\vq_i}{\vk_j}$ for all $i\in[u,v]$ and $j\in[s,e]$
\STATE Compute per-query tile maxima $m^{\text{tile}}_i \leftarrow \max_{j\in[s,e]} \alpha_{i,j}$ for all $i\in[u,v]$
\STATE Compute new maxima $m^{\text{new}}_i \leftarrow \max(m_i,\; m^{\text{tile}}_i)$ for all $i\in[u,v]$

\STATE Rescale old accumulators:
\STATE $\vo_i \leftarrow \vo_i \cdot \exp(m_i - m^{\text{new}}_i)$ \qquad $l_i \leftarrow l_i \cdot \exp(m_i - m^{\text{new}}_i)$ for all $i\in[u,v]$

\STATE Accumulate this tile:
\STATE $\vo_i \leftarrow \vo_i + \sum_{j=s}^e \vv_j \exp(\alpha_{i,j} - m^{\text{new}}_i)$ for all $i\in[u,v]$
\STATE $l_i \leftarrow l_i + \sum_{j=s}^e \exp(\alpha_{i,j} - m^{\text{new}}_i)$ for all $i\in[u,v]$

\STATE Finalize maxima: $m_i \leftarrow m^{\text{new}}_i$ for all $i\in[u,v]$
\end{algorithmic}
\end{algorithm}

Regarding how to accumulate attention contributions from multiple ranges of key--value pairs, we use \citep{rabe2021blockAttention,dao2022flashattention}'s approach; For each query position (or query tile), we maintain the standard online-softmax running statistics:
\begin{itemize}
    \item a running max logit $m$
    \item a running normalizer $l$
    \item and a running numerator vector $\vo$
\end{itemize}
When a new contribution tile is processed, these statistics are updated by rescaling the existing accumulators and adding the tile's contribution computed relative to the updated maximum (keeping the logit maxima is only required for numerical stability). After all prefix tiles have been incorporated for position $t$, the exact attention output is recovered as $\va_t=\vo_t/l_t$.

Algorithm~\ref{alg:update-tile} spells out the $\textsc{UpdateTile}$ primitive used by the forward schedule. It takes a query range and a range of key--value pairs and updates $(m, l,\vo)$ for all queries in the tile in a vectorized manner.

\newpage
\section{Hyperparameter details} \label{app:hyper}

\subsection{C4 pretraining experiments} \label{app:hyper_c4}
For the C4 pretraining experiments in Figure \ref{fig:c4-300m}, we used a 300m non-embedding parameter transformer. For the 12 layer experiments, the width of the model was $1408$, with the MLP width being $5632$ and number of heads being $22$ (so as to keep per-head dim to be $64$). For the 6 layer experiments, the width was adjusted to $2048$, with the MLP width being $8192$ and number of heads being $32$. The maximum sequence length was fixed to $512$ and the models were trained for $1$x Chinchilla tokens ($\approx 6b$ tokens), leading to $25 k$ steps for the $512$ batch size experiment. We used the alibi positional embeddings \citep{press2022trainshorttestlong} with max alibi bias of $8.0$. The throughput of \methodname{} at 12 layers was $42k$ tokens/sec as compared to $132k$ tokens/sec for vanilla transformer. The throughput of \methodname{} at 6 layers was $49k$ tokens/sec as compared to $153k$ tokens/sec for vanilla transformer. 

We used the Adam optimizer for the experiments, with hyperparameter tuning given by: $\eta \in \{1e-3, 3e-3, 1e-2\}, \beta_1 = 0.9, \beta_2 \in \{0.95, 0.99\}, \eps = 1e-8$, and the weight decay was set to 0.0. We used warmup and cosine schedule for the experiments, with the warmup accounting for $40\%$ of the training as found to be optimal for this scale in previous works \citep{zhao2025deconstructing}.

\subsection{Synthetic experiments} \label{app:hyper_synth}
For the synthetic experiments in Figure \ref{fig:synth}, we used a single layer transformer, with model width $128$, MLP width $512$ and $16$ heads as in \citet{poli2024mad}. We used the alibi positional embeddings with max alibi bias set to $8.0$. We used the AdamW optimizer, with the hyperparameter tuning given by: $\eta \in \{1e-4, 5e-4, 1e-3, 5e-4\}, \beta_1 = 0.9, \beta_2 = 0.98, \epsilon = 1e-8, \lambda \in \{ 0.0, 0.1 \}$, where $\lambda$ represents the weight decay.

\section{More experiments and results}

\subsection{Synthetics Token Level}
\label{app:synthetics-token-level}
Figure~\ref{fig:synth-token-acc} shows the token-level accuracies for the different synthetic tasks. Note how in the compression task where neither transformers nor the \methodnameabv{} have non-trivial performance at sequence level, the accuracy becomes non-trivial at the token level and the gap between the two architectures is still prominent.
\begin{figure*}[t]
  \centering
  \includegraphics[width=\textwidth]{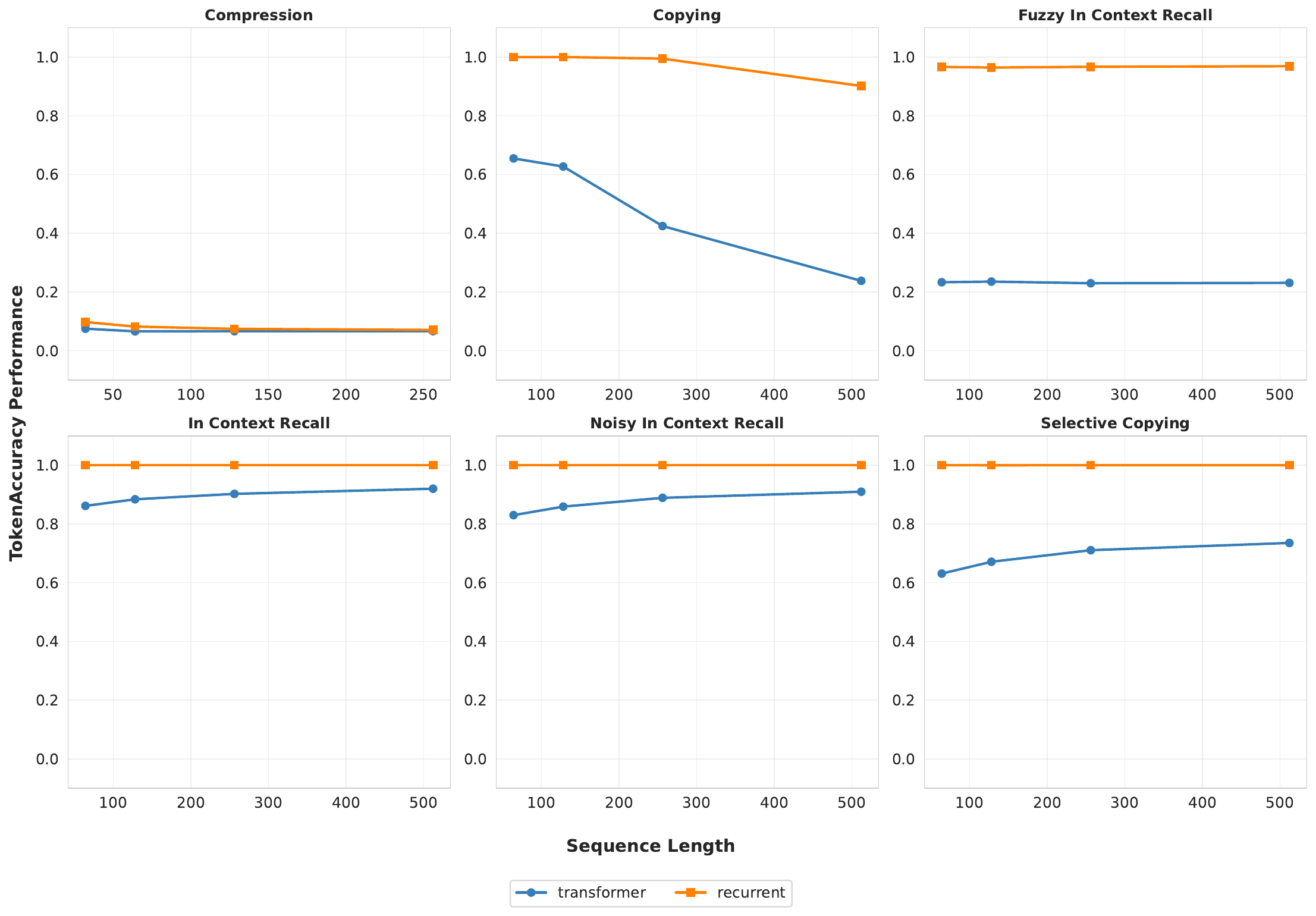}
  \caption{Token level accuracies on synthetic diagnostics (MAD + copy).}
  \label{fig:synth-token-acc}
\end{figure*}

\subsection{RMSNorm Ablation} \label{app:layernorm_stable}
In this section, we ablate the RMSnorm used in \methodname{}, i.e, we replace Equation \ref{eq:persistentK} and \ref{eq:persistentV} with

\begin{align*}
\vk_i &= \qkRMS(K\,\zmark{\vz_i}) \\
\vv_i &= V\,\zmark{\vz_i}
\end{align*}

The best performance with this setup was obtained with $\eta = 1e-3$, with higher learning rates destabilizing. Note that, in comparison, with RMSNorm, even learning rates till $1e-2$ are stable, although, $3e-3$ turns out to be the optimal. The results are shown in Figure \ref{fig:LN-ablate}. As can be seen, the performance obtained is significantly worse without the layernorm.

\begin{figure}[t]
  \centering
  \includegraphics[width=0.7\linewidth]{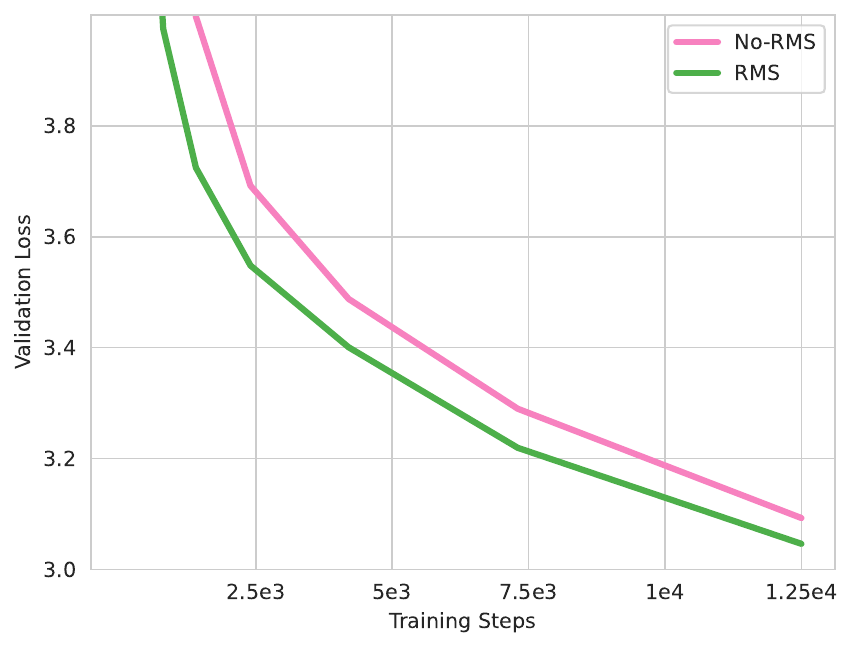}
  \caption{C4 pretraining: Ablating the use of RMSNorm in \methodname{} for 150M parameter model at 512 batch size.}
  \label{fig:LN-ablate}
\end{figure}

\subsection{C4 pretraining (150M scale)} \label{app:150m-pretrain}
For the 150M parameter model, for the 12 layer experiments, the width of the model was 1024, with the MLP width being 4096 and 16 number of heads. For the 6 layer experiments, the width was adjusted to 1408, with the MLP width being 5632 and 22 number of heads. The model was trained for 1x Chinchilla tokens ($\approx$ 3b tokens). We provide loss curves in Figure~\ref{fig:c4-512} and \ref{fig:c4-256} at batch sizes $512$ and $256$ respectively - this is such that the critical batch size of $256K$ of the 150M model is not exceeded \citep{zhang2025how}, while keeping the batch size large enough to have MLPs be compute-bound. Figures~\ref{fig:c4-512} and \ref{fig:c4-256} shows that we have benefits at various batch sizes. The corresponding losses are displayed in Tables~\ref{tab:c4-512} and \ref{tab:c4-256}. We also report the downstream performance of these models in terms of cross entropy loss of the ground truth answer in Tables \ref{tab:c4-512-downstream-ce} and \ref{tab:c4-256-downstream-CE} respectively. We also report the downstream accuracy in Tables \ref{tab:c4-512-downstream-acc} and \ref{tab:c4-256-downstream-acc} respectively.

\begin{table}[t]
\centering
\caption{C4 pretraining loss at 150M parameters at batch size 512.}
\label{tab:c4-512}
\begin{tabular}{lccc}
\toprule
Model & Layers & Width & Val CE $\downarrow$ \\
\midrule
Transformer & 6 & $1408$ & $3.097$ \\
Transformer & 12 & $1024$ & $3.067$ \\
\methodname{} & 6 & $1408$ & $3.049$ \\
\methodname{} & 12 & $1024$ & $3.046$ \\
\bottomrule
\end{tabular}
\end{table}

\begin{figure}[t]
  \centering
  \includegraphics[width=0.7\linewidth]{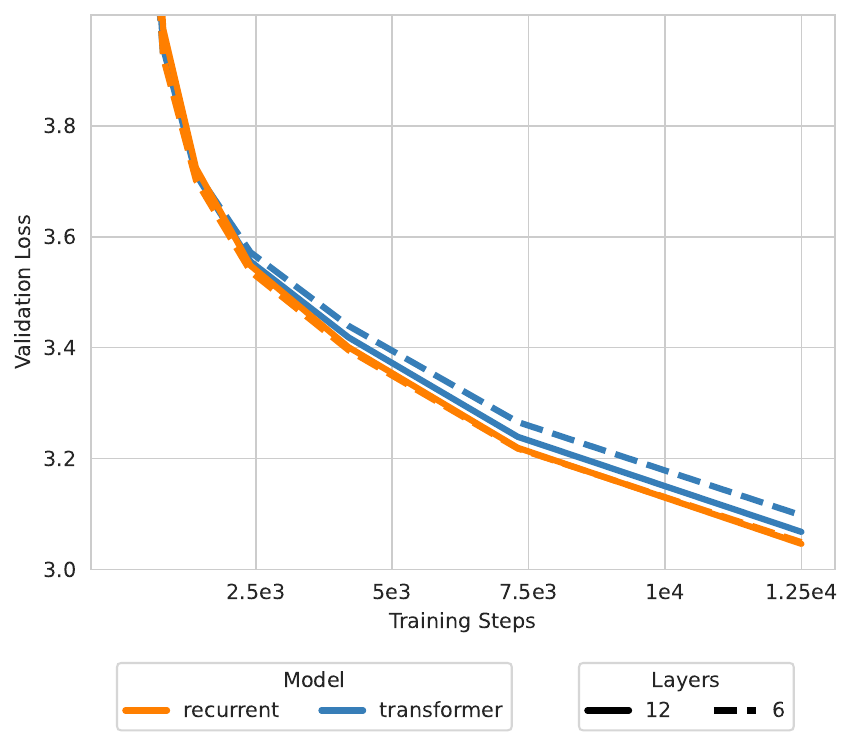}
  \caption{C4 pretraining: loss curve for the 150M parameter model at batch size 512.}
  \label{fig:c4-512}
\end{figure}

\begin{table}[t]
\centering
\caption{Downstream performance for the 150M model at batch size 512.}
\label{tab:c4-512-downstream-ce}
\begin{tabular}{lcccccccc}
\toprule
Model & Layers & piqa CE & hellaswag CE & arc easy CE & openbook qa CE & sciq CE & winogrande CE  \\
\midrule
Transformer & 6 & $5.791$ & $4.460$ & $11.419$ & $12.509$ & $15.3$ & $7.912$ \\
Transformer & 12 & $5.911$ & $4.371$ & $11.87$ & $13.471$ & $14.948$ & $8.365$ \\
Recurrent & 12 & $\textbf{5.606}$ & $\textbf{4.274}$ & $\textbf{11.124}$ & $\textbf{12.067}$ & $\textbf{14.43}$ & $\textbf{7.645}$ \\
Recurrent & 6 & $5.689$ & $4.327$ & $11.242$ & $12.252$ & $14.529$ & $8.314$ \\
\bottomrule
\end{tabular}
\end{table}
\begin{table}[t]
\centering
\caption{Downstream accuracy for the 150M model at batch size 512.}
\label{tab:c4-512-downstream-acc}
\begin{tabular}{lcccccccc}
\toprule
Model & Layers & piqa acc & hellaswag acc & arc easy acc & openbook qa acc & sciq acc & winogrande acc  \\
\midrule
Transformer & 6 & $60.55$ & $28.34$ & $32.11$ & $35.6$ & $34.29$ & $49.49$ \\
Transformer & 12 & $60.94$ & $28.41$ & $31.4$ & $37.4$ & $\textbf{37.17}$ & $\textbf{52.01}$ \\
Recurrent & 12 & $\textbf{61.26}$ & $\textbf{29.67}$ & $\textbf{32.63}$ & $36.2$ & $36.28$ & $50.82$ \\
Recurrent & 6 & $61.21$ & $29.06$ & $31.23$ & $\textbf{38.4}$ & $31.08$ & $48.93$ \\
\bottomrule
\end{tabular}
\end{table}

\begin{table}[t]
\centering
\caption{C4 pretraining loss at 150M parameters, training at batch-size 256.}
\label{tab:c4-256}
\begin{tabular}{lccc}
\toprule
Model & Layers & Width & Val CE $\downarrow$ \\
\midrule
Transformer & 6 & $1440$ & $3.091$ \\
Transformer & 12 & $1024$ & $3.059$ \\
\methodname{} & 6 & $1440$ & $3.037$ \\
\methodname{} & 12 & $1024$ & $3.036$ \\
\bottomrule
\end{tabular}
\end{table}

\begin{figure}[t]
  \centering
  \includegraphics[width=0.7\linewidth]{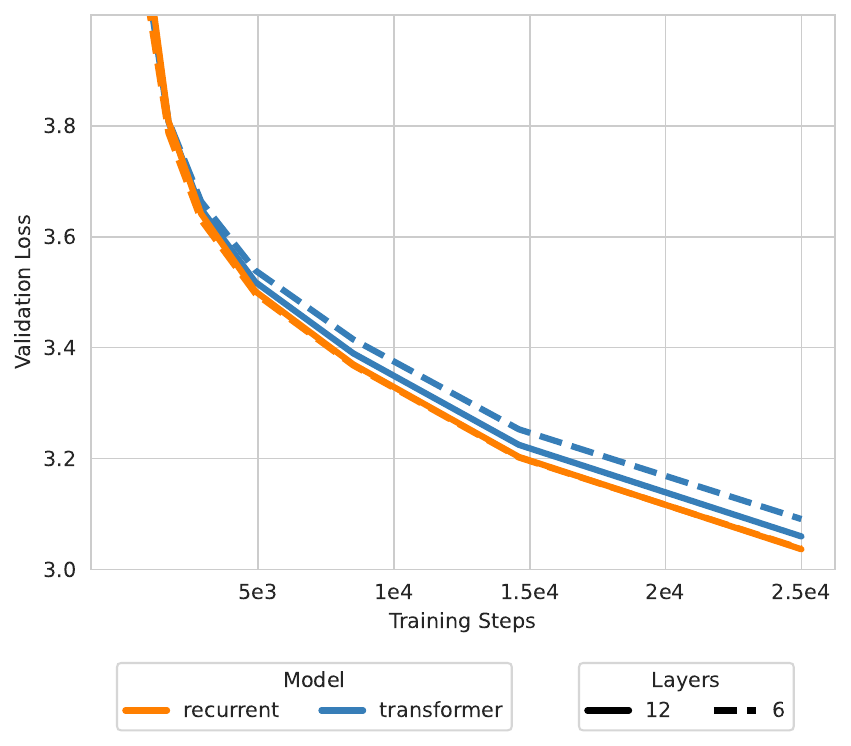}
  \caption{C4 pretraining: loss curve for the 150M parameter model at batch size 256.}
  \label{fig:c4-256}
\end{figure}

\begin{table}[t]
\centering
\caption{Downstream performance for the 150M model at batch size 256.}
\label{tab:c4-256-downstream-CE}
\begin{tabular}{lcccccccc}
\toprule
Model & Layers & piqa CE & hellaswag CE & arc easy CE & openbook qa CE & sciq CE & winogrande CE  \\
\midrule
Transformer & 6 & $5.851$ & $4.43$ & $11.792$ & $13.42$ & $15.061$ & $8.359$ \\
Transformer & 12 & $6.018$ & $4.535$ & $12.552$ & $13.999$ & $16.643$ & $8.883$ \\
Recurrent & 12 & $5.512$ & $4.231$ & $10.822$ & $12.28$ & $14.149$ & $7.842$ \\
Recurrent & 6 & $\textbf{5.387}$ & $\textbf{4.15}$ & $\textbf{10.516}$ & $\textbf{11.529}$ & $\textbf{13.494}$ & $\textbf{7.682}$ \\
\bottomrule
\end{tabular}
\end{table}
\begin{table}[t]
\centering
\caption{Downstream accuracy for the 150M model at batch size 256.}
\label{tab:c4-256-downstream-acc}
\begin{tabular}{lcccccccc}
\toprule
Model & Layers & piqa acc & hellaswag acc & arc easy acc & openbook qa acc & sciq acc & winogrande acc  \\
\midrule
Transformer & 6 & $60.94$ & $28.59$ & $31.58$ & $\textbf{37.8}$ & $39.16$ & $51.3$ \\
Transformer & 12 & $60.5$ & $28.57$ & $32.63$ & $37.6$ & $36.5$ & $\textbf{52.88}$ \\
Recurrent & 12 & $61.92$ & $29.18$ & $31.75$ & $36.6$ & $35.29$ & $49.8$ \\
Recurrent & 6 & $\textbf{62.19}$ & $\textbf{29.28}$ & $\textbf{32.81}$ & $\textbf{37.8}$ & $\textbf{39.71}$ & $50.51$ \\
\bottomrule
\end{tabular}
\end{table}

\subsection{Downstream accuracy of 300M parameter transformer}
\begin{table}[t]
\centering
\caption{Downstream accuracy for the 300M model.}
\label{tab:c4-512-300m-downstream-acc}
\begin{tabular}{lccccccc}
\toprule
Model & Layers & piqa acc & hellaswag acc & arc easy acc & openbook qa acc & sciq acc & winogrande acc  \\
\midrule
Transformer & 6 & $\textbf{63.49}$ & $31.67$ & $\textbf{33.51}$ & $37.8$ & $34.4$ & $50.36$ \\
Transformer & 12 & $62.4$ & $31.82$ & $32.46$ & $\textbf{38.2}$ & $34.4$ & $\textbf{51.85}$ \\
Recurrent & 12 & $\textbf{63.49}$ & $\textbf{33.02}$ & $31.58$ & $37$ & $\textbf{35.95}$ & $50.59$ \\
Recurrent & 6 & $63.27$ & $32.66$ & $30.88$ & $37.2$ & $35.62$ & $49.96$ \\
\bottomrule
\end{tabular}
\end{table}
In Table \ref{tab:c4-512-300m-downstream-acc}, we provide the downstream accuracy for the 300m model.

\end{document}